\newtheorem{theorem}{Theorem}
\newtheorem*{theorem-no}{Theorem}
\newtheorem{lemma}{Lemma}
\newtheorem*{lemma-no}{Lemma}
\newtheorem{remark}{Remark}
\newtheorem{corollary}{Corollary}
\newtheorem{definition}{Definition}
\newtheorem{assumption}{Assumption}
\newcommand{\expect}{\mathbb{E}}
\newcommand{\identity}{\mathrm{I}}
\newcommand\norm[1]{\left\lVert#1\right\rVert}
\title{A Tighter Convergence Proof of Reverse Experience Replay}
\author{Nan Jiang,  Jinzhao Li, Yexiang Xue  \\
    \texttt{\{jiang631, li4255, yexiang\}@purdue.edu} \\
    Department of Computer Science\\
    Purdue University, USA}
\begin{document}

\maketitle
\begin{abstract}

In reinforcement learning, Reverse Experience Replay (RER) is a recently proposed algorithm that attains better sample complexity than the classic experience replay method. RER requires the learning algorithm to update the parameters through consecutive state-action-reward tuples in reverse order. However, the most recent theoretical analysis only holds for a minimal learning rate and short consecutive steps, which converge slower than those large learning rate algorithms without RER. In view of this theoretical and empirical gap, we provide a tighter analysis that mitigates the limitation on the learning rate and the length of consecutive steps. Furthermore, we show theoretically that RER converges with a larger learning rate and a longer sequence.
\end{abstract}


\section{Introduction}

Reinforcement Learning (RL) is highly successful for a variety of practical
problems in the realm of long-term decision-making. Experience Replay (ER) of historical trajectories plays a vital role in RL algorithms~\citep{DBLP:journals/ml/Lin92,nature2015dqn}. The trajectory is a sequence of transitions, where each transition is a state, action, and reward tuple. The memory space used to store these experienced trajectories is noted as the replay buffer. The methods to sample transitions from the replay buffer determine the rate and stability of the convergence of the learning algorithms. 

Recently, Reversed Experience Replay (RER)~\citep{DBLP:conf/corl/FlorensaHWZA17,DBLP:journals/corr/Rotinov2019,DBLP:conf/nips/LeeCC19,DBLP:conf/iclr/Agarwal2022}  is an approach inspired by the hippocampal reverse replay mechanism in human and animal neuron~\citep{foster2006reverse,AMBROSE20161124,doi:10.1073/pnas.2011266118}. Theoretical analysis shows that RER improves the convergence rate towards optimal policies in comparison with ER-based algorithms. 
Unlike ER, which samples transitions uniformly~\citep{DBLP:conf/aaai/HasseltGS16} (known as classic experience replay) or weightily ~\citep{DBLP:journals/corr/SchaulQAS15} (known as prioritized experience replay) from the replay buffer, 
RER samples consecutive sequences of transitions from the buffer and reversely fed into the learning algorithm. 

However, the most recent theoretical analysis on RER with $Q$-learning only holds for a minimal learning rate and short consecutive steps~\citep{DBLP:conf/iclr/Agarwal2022}, which converges slower than classic $Q$-learning algorithm (together with ER) with a large learning rate.  We attempt to bridge the gap between theory and practice for the newly proposed reverse experience replay algorithm.

In this paper, we provide a tighter analysis that relaxes the limitation on the learning rate and the length of the consecutive transitions. Our key idea is to transform the original problem involving a giant summation (shown in Equation~\ref{eq:gamma-gamma}) into a combinatorial counting problem (shown in Lemma~\ref{lem:combi-cases}), which greatly simplifies the whole problem. We hope the new idea of transforming the original problem into a combinatorial counting problem can enlighten other relevant domains.
Furthermore, we show in Theorem~\ref{thm:main-cvg} that RER converges faster with a larger learning rate $\eta$ and a longer consecutive sequence $L$ of state-action-reward tuples.

\section{Preliminaries} \label{sec:prelim}

\paragraph{Markov Decision Process}  
We consider a Markov decision process (MDP) with discounted rewards, noted as $\mathcal{M}=(\mathcal{S}, \mathcal{A}, {P}, r, \gamma)$. 
Here $\mathcal{S}\subset\mathbb{R}^d$ is the set of states, $\mathcal{A}$ is the set of actions, and $\gamma\in(0,1)$ indicates the discounting factor. We use $P: \mathcal{S} \times \mathcal{A} \times  \mathcal{S}\to [0,1]$ as the transition probability kernel of MDP. 
For each pair $(s,a)\in\mathcal{S}\times \mathcal{A}$, $P( s'|s, a) $ is the probability of transiting to state $s'$ from state $s$ when action $a$ is executed. 
The reward function is $r:\mathcal{S} \times \mathcal{A} \to[-1,1] $, such that $r(s,a)$ is the immediate reward from state $s$ when action $a$ is executed~\citep{DBLP:books/wi/Puterman94}. 
The policy $\pi$ is a mapping from states to a distribution over the set of actions: $\pi(s):\mathcal{A}\to [0,1]$, for $s\in \mathcal{S}$. 
A trajectory is noted as $\{(s_t,a_t,r_t)\}_{t=0}^{\infty}$, where $s_t$  (respectively $a_t$) is the state (respectively the action taken) at time $t$, $r_t=r(s_t,a_t)$ is the reward received at time $t$, and $(s_t,a_t,r_t,s_{t+1})$ is the $t$-step transition.

\paragraph{Value Function and $Q$-Function} The value function of a policy $\pi$ is noted as $V^\pi:\mathcal{S}\to\mathbb{R}$. For  $s\in\mathcal{S}$, $ V^{\pi}(s):=\mathbb{E}\left[\sum_{t=0}^\infty\gamma^tr(s_t,a_t|s_0=s)\right]$,
which is the expected discounted cumulative reward received when 1) the initial state is $s_0=s$, 2) the actions are taken based on the policy $\pi$, \textit{i.e.}, $a_t\sim\pi(s_t)$,  for $t\ge 0$.  3) the trajectory is generated by the transition kernel, \textit{i.e.}, $s_{t+1}\sim P(\cdot|s_t,a_t)$, for all $t\ge 0$.
Similarly, let $Q^{\pi}:\mathcal{S}\times \mathcal{A}\to\mathbb{R}$ be the action-value function (also known as the $Q$-function) of a policy $\pi$. For  $(s,a)\in\mathcal{S}\times \mathcal{A}$, it is defined as $ Q^{\pi}(s,a):=\mathbb{E}\left[\sum_{t=0}^\infty\gamma^tr(s_t,a_t|s_0=s,a_0=a)\right].$

There exists an optimal policy, denoted as $\pi^*$ that maximizes $Q^{\pi}(s,a)$ uniformly over all state-action pairs  $(s,a)\in\mathcal{S}\times \mathcal{A}$~\citep{watkins1989learning}. We denote $Q^*$ as the $Q$-function corresponding to $\pi^*$, \textit{i.e.}, $Q^*=Q^{\pi^*}$. 
The Bellman operator $\mathcal{T}$ on a $Q$-function is defined as: for $(s,a)\in\mathcal{S}\times \mathcal{A}$, 
\begin{align*}
\mathcal{T}(Q)(s,a):=r(s,a)+\gamma \mathbb{E}_{s'\sim P(\cdot|s,a)}\left[\max_{a'\in\mathcal{A}}Q(s',a')\right].
\end{align*}
The optimal $Q$-function $Q^*$ is the unique fixed point of the Bellman operator~\citep{DBLP:journals/mor/BertsekasY12}. 

\paragraph{$Q$-learning}~~ The $Q$-learning algorithm is a model-free algorithm to learn $Q^*$~\citep{DBLP:journals/ml/WatkinsD92}.  
The high-level idea is to find the fixed point of the Bellman operator. 
Given the trajectory $\{(s_t,a_t,r_t)\}_{t=0}^\infty$ generated by some underlying behavior policy $\pi'$, the asynchronous $Q$-learning algorithm estimates a new $Q$-function $Q_{t+1}:\mathcal{S}\times \mathcal{A}\to \mathbb{R}$ at each time. At time $t\ge 0$, given a transition $(s_t,a_t,r_t,s_{t+1})$, the algorithm update as follow:
\begin{equation}\label{eq:async_q}
\begin{aligned}
Q_{t+1}(s_{t},a_{t})&=(1-\eta)Q_{t}(s_{t},a_{t})+\eta \mathcal{T}_{t+1}(Q_t)(s_{t+1},a_{t}),\\
Q_{t+1}(s,a)&=Q_{t}(s,a),&\text{ for all } (s,a)\neq (s_t,a_t).  
\end{aligned}
\end{equation}
Here $\eta\in(0,1)$ is the learning rate and $\mathcal{T}_{t+1}$ is the \textit{empirical} Bellman operator: $\mathcal{T}_{t+1}(Q_{t})(s_{t},a_t):=r(s_t,a_t)+\gamma \max_{a'\in\mathcal{A}}Q_t(s_{t+1},a')$. 
Under mild conditions, $Q_t$ will converge to the fixed point of the Bellman operator and hence to $Q^*$. When the state space $\mathcal{S}$ is small, a tabular structure cab be used to store the values of $Q_t(s,a)$ for $(s,a)\in \mathcal{S}\times \mathcal{A}$.

\paragraph{$Q$-learning with Function Approximation}  When the state space $\mathcal{S}$ is large, the asynchronous $Q$-learning in Equation~\eqref{eq:async_q} cannot be applied since it needs to loop over a table of all states and actions. 
In this case, function approximation is 
brought into $Q$-learning. Let $Q^w:\mathcal{S}\times \mathcal{A}\to \mathbb{R}$ be an approximated $Q$-function, which is typically represented with a deep neural network~\citep{nature2015dqn} and $w$ denotes the parameters of the neural network. $Q^w$ is often called the $Q$-network.
%
Given a batch of transitions $\{(s_{t_i},a_{t_i},r_{t_i},s_{t_i+1})\}_{i=1}^m$,  we define $y_{t_i}$ as the image of $Q^{w'}(s_{t_i}, a_{t_i})$ under the empirical Bellman operator, that is:
\begin{equation*}
y_{t_i}:=r_{t_i}+\gamma\max_{a'\in\mathcal{A}} Q^{w'}(s_{{t_i}+1},a'), \quad \text{ for } 1\le i\le m
\end{equation*}
where $w'$ represents the parameters in \textit{target} neural network. Parameters $w'$ are synchronized to $w$ every $T_{target}$ steps of Stochastic Gradient Descent (SGD). Since $Q^*$ is the fixed point of the Bellman operator, $y_{t_i}$ should match $Q^w(s_{t_i}, a_{t_i})$ when $Q^w$ converges to $Q^*$. Hence, learning is done via minimizing the following objective using SGD: $\ell(w)=\frac{1}{m}\sum_{i=1}^m\|y_{t_i} -Q^w(s_{t_i}, a_{t_i})\|_2^2$.

\paragraph{Experience Replay} 
For the $Q$-learning with function approximation, the new trajectories are generated by executing a behavioral policy, which are then saved into the \textit{replay buffer}, noted as $\mathcal{B}$.
When learning to minimize $\ell(w)$, 
SGD is performed on batches of \textit{randomly sampled} transitions from the replay buffer. This process is often called Experience Replay (ER)~\citep{DBLP:journals/ml/Lin92,DBLP:journals/tit/LiWCGC22}.
To improve the stability and convergence rate of $Q$-learning, follow-up works sample transitions from the replay buffer with non-uniform probability distributions.
Prioritized experience replay favors those transitions with a large temporal difference errors~\citep{DBLP:journals/corr/SchaulQAS15,DBLP:journals/jair/SaglamMCK23}. 
Discor~\citep{DBLP:conf/nips/Kumar0L20}  favors those transitions with small Bellman errors. LaBER proposes a generalized  TD error to reduce the variance of gradient and improve learning stability~\citep{DBLP:conf/icml/LahireGR22}.  Hindsight experience replay uses imagined outcomes by relabeling goals in each episode, allowing the agent to learn from unsuccessful attempts as if they were successful~\citep{DBLP:conf/nips/AndrychowiczCRS17}.

\paragraph{Reverse Experience Replay} is a recently proposed variant of experience replay~\citep{DBLP:conf/iclr/GoyalBFSLLLB19,DBLP:conf/icml/BaiWHHG0W21,DBLP:conf/iclr/Agarwal2022}. RER samples \textit{consecutive} sequences of transitions from the replay buffer. 
The $Q$-learning algorithm updates its parameters by performing in the \textit{reverse} order of the sampled sequences. 
Compared with ER, RER converges faster towards the optimal policy  empirically~\citep{DBLP:conf/nips/LeeCC19} and theoretically~\citep{DBLP:conf/iclr/Agarwal2022}, under tabular and linear MDP settings. 
One intuitive explanation of why RER works 
is to consider a sequence of consecutive transitions $s_1\xrightarrow{a_1, r_1} s_2 \xrightarrow{a_2, r_2} s_3$. 
Incorrect $Q$-function estimation of $Q(s_2, a_2)$ will affect the estimation of $Q(s_1, a_1)$. Hence, 
reverse order updates allow the $Q$-value updates of $Q(s_1, a_1)$ to use the most up-to-date value of $Q(s_2, a_2)$, hence accelerating the convergence.

\subsection{Problem Setups for Reverse Experience Replay}
\paragraph{Linear MDP Assumption} In this paper, we follow the definition of linear MDP from~\cite{DBLP:conf/icml/ZanetteLKB20}, which states that the reward function can be written as the inner product of the parameter $w$ and the feature function $\phi$. Therefore, the $Q$ function depends only on $w$ and the feature vector $\phi(s, a)\in \mathbb{R}^d$ for state $s\in\mathcal{S}$ and action $a\in\mathcal{A}$.

\begin{assumption}[Linear MDP setting from~\citealp{DBLP:conf/icml/ZanetteLKB20}]~\label{def:Linear} 
There exists a vector $w \in \mathbb{R}^d$ such that $R(s, a;w) = \langle w, \phi(s, a)\rangle$, and the transition probability is proportional to its corresponding feature $\mathcal{P}(\cdot|s, a) \propto \phi(s, a)$. 
Therefore, the optimal Q-function is $Q^*(s, a;w^*) = \langle w^*, \phi(s, a)\rangle$ for every $s\in \mathcal{S},a\in \mathcal{A}$.
\end{assumption}

The assumption~\ref{def:Linear} is the current popular Linear MDP assumption that allows us to quantify the convergence rate (or sample complexity) for the $Q$-learning algorithm~\citep{DBLP:conf/icml/ZanetteLKB20,DBLP:conf/iclr/Agarwal2022}. We need the following additional assumptions to get the final convergence rate result.
Assume the sequence of consecutive transitions is of length $L$ and the constant learning rate in the gradient descent algorithm is $\eta$.

\begin{assumption}[from \cite{DBLP:conf/icml/ZanetteLKB20}] \label{asump:phi-kappa}
The MDP has zero inherent Bellman error and $\phi(s,a)^\top \phi(s,a)\le 1$ for all $(s,a)\in\mathcal{S}\times \mathcal{A}$. There exists constant $\kappa>0$, such that $\expect_{(s,a)\sim\mu} \phi(s,a)\phi(s,a)^\top\succeq {\identity}/{\kappa}$. Here $\mu$ is the stationary distribution over all the state-action pairs of the Markov chain determined by the transition kernel and the policy.
\end{assumption}
\begin{remark}\label{rem:linear-expect}
Suppose we pick a set of state-action tuples $\mathcal{L}=\{(s,a)|(s,a)\in\mathcal{S}\times \mathcal{A}\}$, which may contains duplicated tuples. By linearity of expectation, we have: $\mathbb{E}_{\mu}\left(\sum_{(s,a)\in \mathcal{L}} \phi(s,a)\phi(s,a)^\top\right)=\sum_{ \mathcal{L}}\mathbb{E}_{(s,a)\sim\mu}\left( \phi(s,a)\phi(s,a)^\top\right) \succeq \frac{|\mathcal{L}|}{\kappa} \identity$. Here $|\mathcal{L}|$ indicates the number of state-action tuples in this set.
\end{remark}

\begin{definition} \label{def:phi}
Given the feature function $\phi: \mathcal{S}\times \mathcal{A}\to \mathbb{R}^d$.
Denote the largest inner product between parameter $w$ and the feature function $\phi$ as $\|w\|_{\phi}=\sup_{(s,a)}|\langle\phi(s,a),w\rangle|$. 
\end{definition}
\begin{definition} \label{def:gamma}
Let $\identity$ be an identity matrix of dimension $d\times d$ and $\eta\in\mathbb{R}$ as the learning rate.
Define matrix $\Gamma_{l}$ recursively as follow:
\begin{align}
    \Gamma_{l}\coloneqq\begin{cases}
    \identity &\text{ for } l=0, \\
    \left(\identity-\eta \phi_{L+1-l}\phi^\top_{L+1-l}\right) \Gamma_{l-1} & \text{ for } 1\le l\le L,
    \end{cases}
\end{align}
where we use the simplified notation $\phi_{L+1-l}$ to denote $\phi(s_{L+1-l},a_{L+1-l})$.
The explicit form for $\Gamma_{L}$ is: 
\begin{equation} 
\Gamma_{L}=\left(\identity-\eta \phi_{1}\phi^\top_{1}\right)\left(\identity-\eta \phi_{2}\phi^\top_{2}\right)\ldots \left(\identity-\eta \phi_{L}\phi^\top_{L}\right)=\prod_{l=1}^L\left(\identity-\eta \phi_{l}\phi^\top_{l}\right)
\end{equation}
\end{definition}

The semantic interpretation of $\Gamma_L$ in Definition~\ref{def:gamma} is that it represents the coefficient of the bias term in the error analysis of the learning algorithm's parameter (as outlined in Lemma~\ref{lemma:error-decomp}). This joint product arises because the RER algorithm updates the parameter over a subsequence of consecutive transitions of length $L$. The norm of $\Gamma_L$ is influenced by both the sequence length $L$ and the learning rate $\eta$. When the norm of $\Gamma_L$ is small, the parameters of the learning model converge more rapidly to their optimal values.

\section{Methodology} \label{sec:method}

\subsection{Motivation}

Let $\mu$ denote the stationary distribution of the state-action pairs in the MDP, $\eta$ be the learning rate of the gradient descent algorithm, and $L$ the length of consecutive transitions processed by the RER algorithm. Previous work~\citep[Lemma 8 and Lemma 14]{DBLP:conf/iclr/Agarwal2022} established that when $\eta L \le \frac{1}{3}$, the following inequality holds:
\begin{equation} \label{eq:previous-bound}
\mathbb{E}_{(s,a)\sim\mu} \left[\Gamma_{L}^\top \Gamma_{L}\right] \preceq\identity-\eta\sum_{l=1}^L\mathbb{E}_{(s,a)\sim\mu}\left[\phi_{l}\phi^\top_{l}\right]\preceq \left(1-\frac{\eta L}{\kappa}\right)\identity,
\end{equation}
where the matrix $\Gamma_L\in\mathbb{R}^{d\times d}$ is defined in Definition~\ref{def:gamma} and serves as a ``coefficient'' in the convergence analysis, as outlined in Lemma~\ref{lemma:error-decomp}. The positive semi-definite relation $\preceq$ between two matrices is defined in Definition~\ref{def:psd}. Here, $\identity$ represents an identity matrix of dimension $d \times d$, and the coefficient $\kappa > 0$ is introduced in Assumption~\ref{asump:phi-kappa}.  The matrix $\Gamma_L$ was mentioned in \citep[Appendix E, Equation 5]{DBLP:conf/iclr/Agarwal2022}, but we provide a formal definition here and streamline the original expression by removing unnecessary variables.

The condition in Equation~\eqref{eq:previous-bound} was further incorporated into the convergence requirement in~\citep[Theorem 1]{DBLP:conf/iclr/Agarwal2022}. It suggests that the RER algorithm cannot handle sequences of consecutive transitions that are too long (corresponding to a large $L$) or use a learning rate that is too large (i.e., $\eta$). This presents a major limitation between the theoretical justification and real-world application of the RER algorithm. In this work, we address this gap by providing a tighter theoretical analysis that relaxes the constraint $\eta L \le 1/3$.

We begin by explaining the main difficulty in upper-bounding the term $\mathbb{E}_{(s,a)\sim\mu} \left[\Gamma_{L}^\top \Gamma_{L}\right]$. According to Definition~\ref{def:gamma}, we can expand $\Gamma_{L}^\top$ as $\Gamma_{L}^\top = \left(\identity - \eta \phi_{L} \phi^\top_{L}\right)\cdots\left(\identity - \eta \phi_{1} \phi^\top_{1}\right)$. Using the linearity of expectation, we expand the entire joint product $\Gamma_{L}^\top \Gamma_{L}$ under the expectation as follows:
\begin{align}\label{eq:gamma-gamma}
\mathbb{E}_{(s,a)\sim\mu} \left[\Gamma_{L}^\top \Gamma_{L}\right] &= \mathbb{E}_{(s,a)\sim\mu} \left[\left(\identity - \eta \phi_{L} \phi^\top_{L}\right)\cdots\left(\identity - \eta \phi_{1} \phi^\top_{1}\right)\left(\identity - \eta \phi_{1} \phi^\top_{1}\right)\cdots\left(\identity - \eta \phi_{L} \phi^\top_{L}\right)\right] \nonumber\\
&= \identity - 2\eta \mathbb{E}_{(s,a)\sim\mu} \left[\sum_{l=1}^{L} \phi_{l} \phi^\top_{l}\right] + \mathbb{E}_{(s,a)\sim\mu} \left[\sum_{k=2}^{2L} (-\eta)^k \sum_{l_1,\ldots,l_k} \phi_{l_1} \phi^\top_{l_1} \ldots \phi_{l_k} \phi^\top_{l_k}\right].
\end{align}
In the third term on the right-hand side (RHS) of the second line, the summation is over all valid combinations of the indices $(l_1, l_2, \ldots, l_k)$, where $l_1, l_2, \ldots, l_k \in \{1, 2, \ldots, L\}$. This is determined by first selecting the index $l_1$ from the index sequence $[L, L-1, \ldots, 2, 1, 1, 2, \ldots, L-1, L]$, as seen in the first row of the equation above. The second index $l_2$ is then chosen, ensuring that $l_2$ lies to the right of $l_1$. The valid combination constraint requires the entire sequence $l_1, \ldots, l_k$ to satisfy the condition that $l_{i-1}$ must appear to the left of $l_{i}$.

The main challenge to upper-bound the entire product $\Gamma_{L}^\top \Gamma_{L}$ under expectation lies in upper-bound the combinatorially many high-order terms. Our approach leverages the high-level idea that the RHS of Equation \eqref{eq:gamma-gamma} can be upper-bounded by a form of $\mathbb{E}_{(s,a)\sim\mu} \left[\sum_{l=1}^{L}\phi_{l}\phi^\top_{l}\right]$ with an appropriate coefficient. Specifically, we demonstrate that the third term on the RHS, which contains a large number of combinatorial terms of the form $\phi_{l_1} \phi^\top_{l_1} \cdots \phi_{l_k} \phi^\top_{l_k}$, can be bounded by terms involving only $\phi_{l} \phi^\top_{l}$ (with $1 \le l \le L$) through the use of a proposed combinatorial counting method.

\begin{theorem} \label{thm:main}
Let $\mu$ be the stationary distribution of the state-action pair in the MDP. The following matrix inequalities, which are positive semi-definite, hold for $\eta \in (0,1)$:
\begin{align} 
\mathbb{E}_{(s,a)\sim\mu} \left[\Gamma_{L}^\top \Gamma_{L}\right] \preceq \left(1 - \frac{ (\eta (4-2L)  -(1 - \eta)^{L-1} -\eta^2+1)L}{\kappa}\right)\identity,
\end{align}
where the matrix $\Gamma_L$ is defined in Definition~\ref{def:gamma}. The relation $\preceq$ between the matrices on both sides is defined in Definition~\ref{def:psd}, referring to the positive semi-definite property.
\end{theorem}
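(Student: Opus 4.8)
The plan is to work directly from the expansion in Equation~\eqref{eq:gamma-gamma} and prove that the third term, which collects all the high-order products, is dominated in the positive semi-definite order (Definition~\ref{def:psd}) by a scalar multiple of $\sum_{l=1}^{L}\phi_l\phi_l^\top$. Together with the explicit low-order part $\identity-2\eta\sum_{l=1}^{L}\phi_l\phi_l^\top$, this collapses the whole expression into the shape $\identity-c\,\sum_{l=1}^{L}\mathbb{E}_{(s,a)\sim\mu}[\phi_l\phi_l^\top]$ for an explicit scalar $c=c(\eta,L)$. Provided $c\ge 0$, Remark~\ref{rem:linear-expect} turns $\sum_{l=1}^{L}\mathbb{E}_{(s,a)\sim\mu}[\phi_l\phi_l^\top]\succeq\frac{L}{\kappa}\identity$ into the stated $\bigl(1-\frac{cL}{\kappa}\bigr)\identity$, so the final task is to show that the scalar $c$ one extracts equals $\eta(4-2L)-(1-\eta)^{L-1}-\eta^2+1$.

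For the core estimate I would use the palindromic structure of the length-$2L$ index sequence $[L,\dots,1,1,\dots,L]$ generating the products in \eqref{eq:gamma-gamma}: writing $\psi_p$ for the feature at position $p$ (so $\psi_p=\psi_{2L+1-p}$ and each $\phi_l$ occurs at exactly two positions), every high-order product is fixed by a choice of positions $p_1<\dots<p_k$. The combinatorial counting step (Lemma~\ref{lem:combi-cases}) is to group these products by their two endpoint positions $(p_1,p_k)$ and resum the interior choices: for fixed endpoints the interior indices range over all subsets of the positions strictly between $p_1$ and $p_k$, and the corresponding binomial sum reassembles an ordered product of contraction factors, giving $\eta^2\,\psi_{p_1}\psi_{p_1}^\top\bigl(\prod_{p=p_1+1}^{p_k-1}(\identity-\eta\psi_p\psi_p^\top)\bigr)\psi_{p_k}\psi_{p_k}^\top$ for each endpoint pair. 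I would then collapse each such term onto single-index matrices using the elementary inequality $\psi_a\psi_b^\top+\psi_b\psi_a^\top\preceq\psi_a\psi_a^\top+\psi_b\psi_b^\top$ and the bound $\phi^\top\phi\le 1$ from Assumption~\ref{asump:phi-kappa}, and finally sum the resulting coefficients over all endpoint pairs; the geometric sums over the endpoint gap are what create the $(1-\eta)^{L-1}$ factor and the remaining polynomial-in-$\eta$ terms of $c$.

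The step I expect to be the main obstacle is controlling the signs during this resummation so that the coefficient stays benign. A naive bound that expands each interior product and passes to absolute values effectively replaces every factor $\identity-\eta\psi_p\psi_p^\top$ by growth of order $1+\eta$, making the coefficient explode like $(1+\eta)^{2L}$ --- useless for recovering $c$. The cancellation that yields the harmless $(1-\eta)$-type decay must therefore be kept, either by treating the interior product as a single contraction of operator norm at most $1$ or by an exact alternating count; retaining just the crude operator-norm bound already gives a clean but incomparable surrogate $c=2\eta-\eta^2(2L-1)$, whereas pinning down the precise closed form with its $(1-\eta)^{L-1}$ term is the delicate part. I would cross-check the bookkeeping at $L=1$, where $\Gamma_1^\top\Gamma_1=\identity-(2\eta-\eta^2\|\phi_1\|^2)\phi_1\phi_1^\top$ reproduces $c=\eta(2-\eta)$ exactly, and I would also sanity-check the whole grouping against the telescoping identity $\Gamma_L^\top\Gamma_L=\identity-\eta\sum_{a=1}^{2L}\psi_a\psi_a^\top\prod_{p=a+1}^{2L}(\identity-\eta\psi_p\psi_p^\top)$, which independently reproduces the low-order terms of \eqref{eq:gamma-gamma} and isolates precisely the quantity that must be lower-bounded in the PSD order.
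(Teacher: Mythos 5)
Your overall skeleton is the paper's (expand \eqref{eq:gamma-gamma}, dominate the high-order part by a scalar multiple of $\sum_{l=1}^L\phi_l\phi_l^\top$, finish with Remark~\ref{rem:linear-expect}), and your endpoint resummation identity is a correct algebraic fact: grouping the expansion of the ordered product by the extreme positions $(p_1,p_k)$ does reassemble the interior into $\eta^2\,\psi_{p_1}\psi_{p_1}^\top\bigl(\prod_{p=p_1+1}^{p_k-1}(\identity-\eta\psi_p\psi_p^\top)\bigr)\psi_{p_k}\psi_{p_k}^\top$. The genuine gap is that the argument you actually carry through yields the coefficient $c=2\eta-\eta^2(2L-1)$, not the theorem's $c=\eta(4-2L)-(1-\eta)^{L-1}-\eta^2+1$, and you explicitly defer the latter as ``the delicate part'' without supplying it --- so the stated inequality is never proved. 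Worse, the route you sketch cannot produce it: once you bound the interior product by a contraction of operator norm at most one and apply AM--GM, the contribution of each endpoint pair is $\frac{\eta^2}{2}(\psi_{p_1}\psi_{p_1}^\top+\psi_{p_k}\psi_{p_k}^\top)$, which is \emph{independent of the gap} $p_k-p_1$; summing over pairs merely counts each position $2L-1$ times (that is exactly where your $\eta^2(2L-1)$ comes from), and no geometric series over the gap ever appears. So the claim that ``geometric sums over the endpoint gap create the $(1-\eta)^{L-1}$ factor'' is inconsistent with your own bounding scheme.

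The paper's $(1-\eta)^{L-1}$ has a different origin, which is the piece missing from your proposal. It relaxes each high-order term to $\frac12(\phi_{l_1}\phi_{l_1}^\top+\phi_{l_k}\phi_{l_k}^\top)$ while \emph{retaining the signed weight} $(-\eta)^k$ (Lemma~\ref{lem:relax}); it then counts, for each fixed slot $l$ and each order $k$, how many index sequences have an endpoint at slot $l$, namely $\binom{L+l-2}{k-1}+\binom{L-l}{k-1}+\binom{2l-2}{k-2}$ (Lemma~\ref{lem:combi-cases}); and finally it evaluates the signed sum over $k$ by the binomial theorem (Lemma~\ref{lem:combi-weighted}), which is precisely what generates $(1-\eta)^{L+l-2}+(1-\eta)^{L-l}+\eta^2(1-\eta)^{2l-2}+\eta(2L-2)-2$, bounded independently of $l$ in Remark~\ref{rem:upper-lower-bound}. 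Your instinct that the sign bookkeeping is the crux is sound --- keeping $(-\eta)^k$ after a term-wise absolute-value relaxation is only transparently legitimate for even $k$, so this is in fact the delicate point of the paper's own argument --- but your proposal responds to that difficulty by proving a different, incomparable bound (tighter than the paper's for small $\eta$, weaker for $\eta$ close to $1$) rather than the statement in question.
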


\begin{proof}[Proof Sketch] 
By the linearity of expectation, we can upper-bound the second part of Equation~\eqref{eq:gamma-gamma} as follows:
\begin{align}
-2\eta \mathbb{E}_{(s,a)\sim\mu} \left[\sum_{l=1}^{L} \phi_{l} \phi^\top_{l}\right] = -2\eta \sum_{l=1}^{L} \mathbb{E}_{(s,a)\sim\mu} \left[\phi_{l} \phi^\top_{l}\right] = -2\eta L \mathbb{E}_{(s,a)\sim\mu} \left[\phi \phi^\top\right] \preceq -\frac{2\eta L}{\kappa} \identity.
\end{align}
Based on the new analysis from Lemma~\eqref{lem:combi-cases}, the third part in Equation~\eqref{eq:gamma-gamma} is upper-bounded as:
\begin{align*}
\mathbb{E}_{(s,a)\sim\mu} \left[\sum_{k=2}^{2L} (-\eta)^k \sum_{l_1,\ldots,l_k} \phi_{l_1} \phi^\top_{l_1} \ldots \phi_{l_k} \phi^\top_{l_k} \right] 
&\preceq \mathbb{E}_{(s,a)\sim\mu} \left[\sum_{k=2}^{2L} (-\eta)^k \sum_{l_1,\ldots,l_k} \frac{1}{2}( \phi_{l_1} \phi^\top_{l_1} + \phi_{l_k} \phi^\top_{l_k} ) \right]  \\
&\preceq \left( (1 - \eta)^{L-1} + \eta^2 + \eta(2L-2) - 1 \right) \mathbb{E}_{(s,a)\sim\mu} \left[\sum_{l=1}^L \phi_{l} \phi^\top_{l}\right] \\
&\preceq \frac{((1 - \eta)^{L-1} + \eta^2  + \eta(2L-2) - 1)L}{\kappa} \identity.
\end{align*}
Combining these two inequalities, we arrive at the upper bound stated in the theorem. A detailed proof can be found in Appendix~\ref{apx:combi-count-proof}.
\end{proof}

Theorem~\ref{thm:main} is established based on the new analysis in Lemma~\eqref{lem:combi-cases}, which is introduced in Section~\ref{sec:comb-count}. It serves as a key component in the final convergence proof of the RER algorithm, which will be presented in Section~\ref{sec:final-theorem}.

\paragraph{Numerical Justification of the Tighter Bound}
We provide a numerical evaluation of the derived bound and the original bound in~\citet[Lemma 8]{DBLP:conf/iclr/Agarwal2022} in Figure~\ref{fig:num-comp}\footnote{The code implementation for the numerical evaluation of the equalities and inequalities in this paper is available at \url{https://github.com/jiangnanhugo/RER-proof}.}. For a fixed value of sequence length $L$, we compare the value $(\eta (4 - 2L) - (1 - \eta)^{L-1} - \eta^2 + 1) L$ in our derived upper bound and the original value $\eta L$. For all the different sequence lengths, our derived expression value is numerically higher than the original expression, which implies our bound (in Lemma~\ref{lem:combi-weighted}) is tighter than the original one in~\citet[Lemma 8]{DBLP:conf/iclr/Agarwal2022}.

\begin{figure}[!h]
    \centering
    \includegraphics[width=0.99\linewidth]{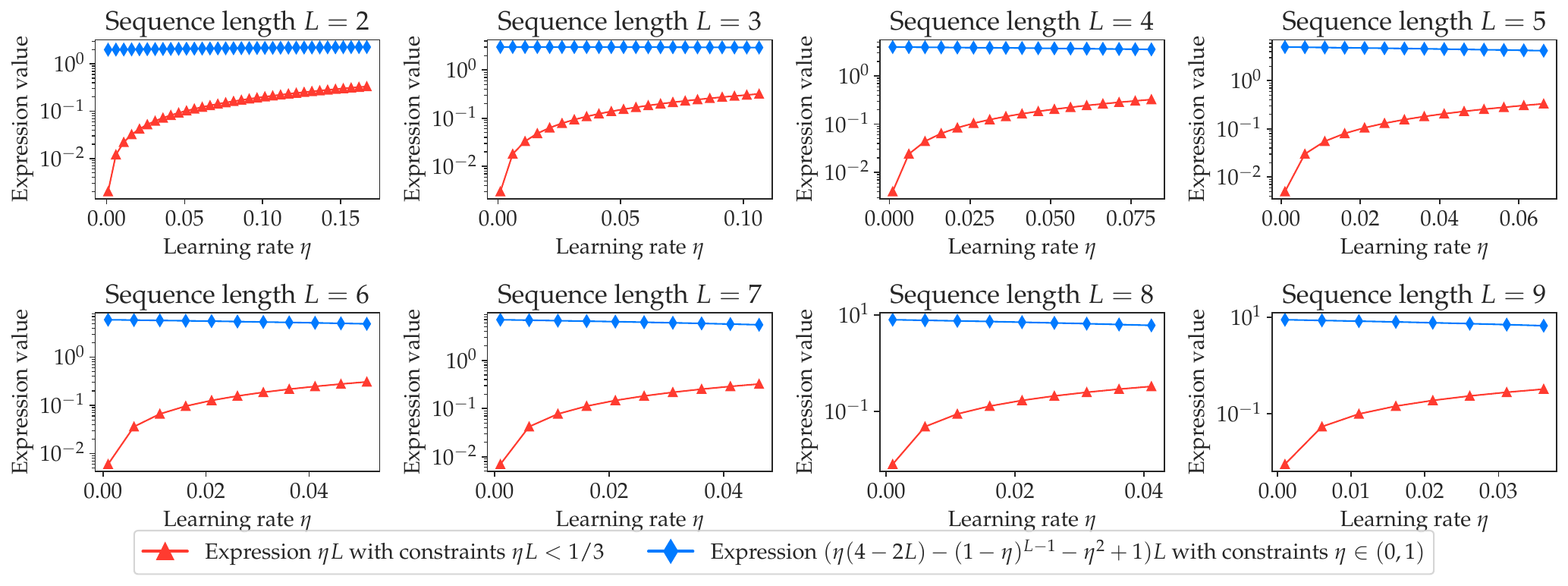}
       \caption{For all the different sequence lengths, our derived expression value is numerically higher than the original expression, which implies our bound (in Lemma~\ref{lem:combi-weighted}) is tighter than the original one in~\citet[Lemma 8]{DBLP:conf/iclr/Agarwal2022}.}
    \label{fig:num-comp}
\end{figure}

\subsection{Relaxing the Requirement $\eta L\le 1/3$ through Combinatorial Counting} \label{sec:comb-count}

\begin{lemma}\label{lem:relax}
Let $\mathbf{x} \in \mathbb{R}^d$ be any non-zero $d$-dimensional vector. For $l_1, \ldots, l_k \in \{1, 2, \ldots, L\}$ and $2 \leq k \leq 2L$, consider a high-order term $\phi_{l_1} \phi^\top_{l_1} \ldots \phi_{l_k} \phi^\top_{l_k}$ in Equation~\eqref{eq:gamma-gamma}. By Assumption~\ref{rem:linear-expect}, we can relax this high-order term as follows:
\begin{align}
|\mathbf{x}^\top \phi_{l_1} \phi^\top_{l_1} \ldots \phi_{l_k} \phi^\top_{l_k} \mathbf{x}| &\le \frac{1}{2} \mathbf{x}^\top \left( \phi_{l_1} \phi^\top_{l_1} + \phi_{l_k} \phi^\top_{l_k} \right) \mathbf{x}.
\end{align}
\end{lemma}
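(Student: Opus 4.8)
The plan is to exploit the rank-one structure of each factor $\phi_{l_i}\phi_{l_i}^\top$ so that the long product telescopes down to a scalar multiple of the single outer product $\phi_{l_1}\phi_{l_k}^\top$. First I would observe that each interior quantity $\phi_{l_i}^\top \phi_{l_{i+1}}$ is a scalar, so by repeatedly pulling these inner products out of the product one obtains the identity
\[
\phi_{l_1}\phi_{l_1}^\top \cdots \phi_{l_k}\phi_{l_k}^\top = \left(\prod_{i=1}^{k-1}\phi_{l_i}^\top\phi_{l_{i+1}}\right)\phi_{l_1}\phi_{l_k}^\top .
\]
This telescoping is the central structural observation; it reduces an apparently complicated $k$-fold product to a scalar prefactor times a single rank-one matrix, after which everything else is elementary.

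Next I would bound the scalar prefactor. Since Assumption~\ref{asump:phi-kappa} gives $\phi(s,a)^\top\phi(s,a)\le 1$, i.e.\ $\|\phi_{l_i}\|\le 1$ for every index, the Cauchy--Schwarz inequality yields $|\phi_{l_i}^\top\phi_{l_{i+1}}|\le \|\phi_{l_i}\|\,\|\phi_{l_{i+1}}\|\le 1$ for each $i$. Hence the product of the $k-1$ interior inner products has absolute value at most $1$, and applying this to the quadratic form gives
\[
\left|\mathbf{x}^\top \phi_{l_1}\phi_{l_1}^\top \cdots \phi_{l_k}\phi_{l_k}^\top\,\mathbf{x}\right| \le \left|\phi_{l_1}^\top\mathbf{x}\right|\cdot\left|\phi_{l_k}^\top\mathbf{x}\right| .
\]

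Finally I would close the argument with the elementary inequality $|ab|\le \tfrac12(a^2+b^2)$ applied to the scalars $a=\phi_{l_1}^\top\mathbf{x}$ and $b=\phi_{l_k}^\top\mathbf{x}$, and then rewrite $a^2=\mathbf{x}^\top\phi_{l_1}\phi_{l_1}^\top\mathbf{x}$ and $b^2=\mathbf{x}^\top\phi_{l_k}\phi_{l_k}^\top\mathbf{x}$ to recover exactly
\[
\tfrac12\,\mathbf{x}^\top\!\left(\phi_{l_1}\phi_{l_1}^\top+\phi_{l_k}\phi_{l_k}^\top\right)\mathbf{x},
\]
which is the claimed bound.

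I do not expect a genuine obstacle here: the only things to get right are the telescoping identity and the verification that the $k-1$ interior factors jointly contribute a prefactor bounded by one, after which the quadratic-form rewriting and the AM--GM step are routine. The point worth emphasizing is that the resulting bound is uniform in both the length $k$ and the particular index pattern $(l_1,\ldots,l_k)$, since only the two endpoint indices $l_1$ and $l_k$ survive; this uniformity is precisely what makes the downstream combinatorial counting in Lemma~\ref{lem:combi-cases} tractable.
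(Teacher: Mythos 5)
Your proposal is correct and follows essentially the same route as the paper's own proof: collapse the product to a scalar prefactor times $\phi_{l_1}\phi_{l_k}^\top$ via the interior inner products, bound each $|\phi_{l_i}^\top\phi_{l_{i+1}}|\le 1$ by Cauchy--Schwarz using $\|\phi\|\le 1$ from Assumption~\ref{asump:phi-kappa}, and finish with AM--GM on $|\phi_{l_1}^\top\mathbf{x}|\cdot|\phi_{l_k}^\top\mathbf{x}|$. The only difference is presentational: you state the telescoping identity explicitly, which the paper leaves implicit in its phrase about simplifying the neighboring inner products.
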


The proof of this inequality can be found in Appendix~\ref{apx-relax-proof}. 

This result implies that, after relaxation, only the first term $\phi_{l_1} \phi^\top_{l_1}$ (indexed by $l_1$) and the last term $\phi_{l_k} \phi^\top_{l_k}$ (indexed by $l_k$) determine the upper bound of the high-order term $\phi_{l_1} \phi^\top_{l_1} \ldots \phi_{l_k} \phi^\top_{l_k}$. This relaxation simplifies the original complex summation problem $\sum_{1 \leq l_1, \ldots, l_k \leq L}$ to count how many valid $l_1$ and $l_k$ can be selected at each possible position in the sequence of transitions.

\begin{lemma} \label{lem:combi-cases}
Based on the relaxation provided in Lemma~\ref{lem:relax}, the third part
in Equation~\eqref{eq:gamma-gamma} can be expanded combinatorially as follows:
 \begin{equation} \label{eq:comb}
\sum_{k=2}^{2L} (-\eta)^k \sum_{l_1, \ldots, l_k} \frac{1}{2} (\phi_{l_1} \phi^\top_{l_1} + \phi_{l_k} \phi^\top_{l_k}) = \underbrace{\sum_{k=2}^{2L} (-\eta)^k \sum_{l=1}^L \left(\binom{L + l - 2}{k - 1} + \binom{L - l}{k - 1} + \binom{2l - 2}{k - 2}\right)}_{\text{sum over combinatorially many terms}} \phi_l \phi^\top_l
 \end{equation}  
 \begin{proof}[Sketch of Proof]
 As depicted in Figure~\ref{fig:combination-short}, we consider two arrays of length $L$. The indices in these arrays are symmetrical: the left array decreases from $L$ to $1$, while the right array increases from $1$ to $L$. These arrays represent the indices of the matrix products in the first line of Equation~\eqref{eq:gamma-gamma}. The left array simulates \(\Gamma_L\), and the right array simulates \(\Gamma_L^\top\). 
The key idea is to count the number of combinations of $l_1$ and $l_k$ that can produce \(\phi_l \phi_l^\top\) for a fixed $l$ (where $1 \le l \le L$).

\begin{figure}[!h]
    \centering
    \includegraphics[width=1\linewidth]{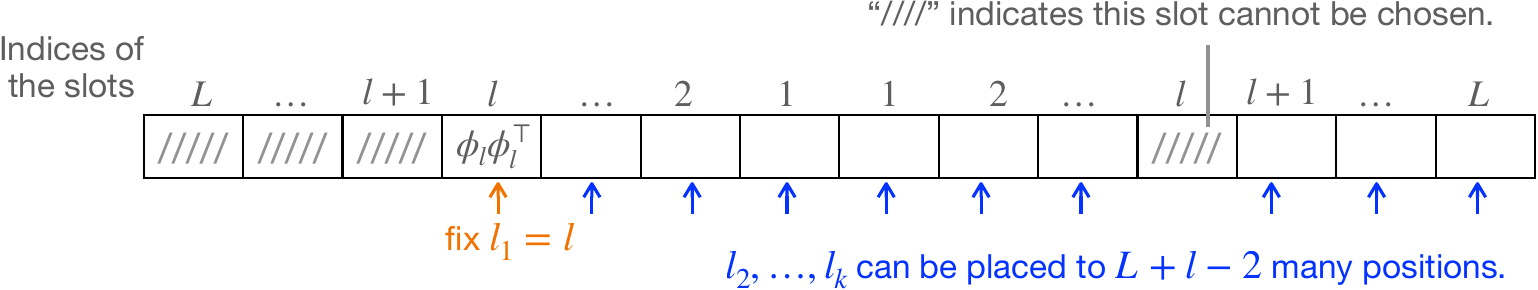}
    \caption{Case 1 in the proposed combinatorial counting procedure. This case illustrates how many terms of the form \(\phi_{l_1} \phi^\top_{l_1} \ldots \phi_{l_k} \phi^\top_{l_k}\) can be reduced to \(\phi_l \phi_l^\top\) for a fixed $l$ using Lemma~\ref{lem:relax}, where $1 \leq l \leq L$. If $l_1$ is assigned to the left $l$-th slot, then $l_k$ cannot choose any of the left terms with indices $L, \ldots, l+1$ due to the sequential ordering constraint $l_i$ must be to the right of $l_{i-1}$. To avoid double counting, $l_k$ is also disallowed from occupying the right $l$-th slot. Consequently, there are $ L + l - 2$ available slots for assigning the remaining sequence $l_2, \ldots, l_k$ of length $k-1$. Therefore, there are \(\binom{L + l - 2}{k - 1}\) such terms for this case. Further cases are illustrated in Figure~\ref{fig:combination} in the appendix.}
    \label{fig:combination-short}
\end{figure}

In the first case, illustrated in Figure~\ref{fig:combination-short}, we fix $l_1$ in the left $l$-th slot. For $l_k$, it cannot choose any of the slots in the left array with indices $L, \ldots, l+1$ due to the sequential ordering constraint, which requires that $l_{i-1}$ must be to the left of $l_i$. Additionally, to avoid double counting, we also exclude the right $l$-th slot for $l_k$. Consequently, there are $L + l - 2$ available slots for assigning the remaining sequence $l_2, \ldots, l_k$. This results in \(\binom{L + l - 2}{k - 1}\) contributions for this case, as shown on the right-hand side.

For the remaining cases, detailed in Figure~\ref{fig:combination} and analyzed in Appendix~\ref{apx:combi-case-proof}, they contribute to the second and last terms in Equation~\eqref{eq:comb}.
\end{proof}
\end{lemma}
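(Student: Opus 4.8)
The plan is to turn the matrix identity~\eqref{eq:comb} into a counting statement: for each fixed target index $l\in\{1,\dots,L\}$ and each length $k$, determine the scalar coefficient that multiplies $\phi_l\phi_l^\top$ on the left-hand side and show it equals $\binom{L+l-2}{k-1}+\binom{L-l}{k-1}+\binom{2l-2}{k-2}$. After the relaxation of Lemma~\ref{lem:relax}, every order-$k$ product collapses to $\tfrac12(\phi_{l_1}\phi_{l_1}^\top+\phi_{l_k}\phi_{l_k}^\top)$, so only the endpoint indices $l_1$ and $l_k$ matter. Thus the coefficient of $\phi_l\phi_l^\top$ is $\tfrac12$ times the number of valid tuples with $l_1=l$ plus $\tfrac12$ times the number with $l_k=l$, where a tuple is valid iff its selected slots are strictly increasing in the length-$2L$ array $[L,\dots,1,1,\dots,L]$, in which the value $l$ sits exactly at the left slot $L+1-l$ and the right slot $L+l$.

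First I would record the palindromic symmetry of that array: the reflection $p\mapsto 2L+1-p$ is a bijection on valid tuples that swaps $l_1$ with $l_k$. This collapses the two half-counts into a single count of valid tuples whose first selected slot carries value $l$, which I would then break into the three configurations drawn in Figure~\ref{fig:combination-short} and Figure~\ref{fig:combination}. Case (i): the left slot of $l$ is the first selected slot and the right slot of $l$ is unused, so the remaining $k-1$ indices occupy $k-1$ of the $L+l-2$ slots lying to the right of $L+1-l$ but excluding $L+l$, giving $\binom{L+l-2}{k-1}$. Case (ii): the right slot of $l$ is the first selected slot, so the remaining $k-1$ indices lie among the $L-l$ slots to the right of $L+l$, giving $\binom{L-l}{k-1}$. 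Case (iii): the left slot is $l_1$ and the right slot is $l_k$, so the $k-2$ interior indices fill $k-2$ of the $2l-2$ slots strictly between $L+1-l$ and $L+l$, giving $\binom{2l-2}{k-2}$. Summing the three and then over $l$ and $k$ is meant to reproduce~\eqref{eq:comb}.

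The step I expect to be the crux is verifying that these three cases are simultaneously exhaustive and non-overlapping over all tuples that contribute weight to $\phi_l\phi_l^\top$. The difficulty is genuinely combinatorial: because $l$ appears twice in the array and the ordering constraint couples the two copies, it is easy either to double count a tuple attributable to more than one case, or to overlook tuples in which the non-endpoint copy of $l$ is also selected (the latter is exactly what case~(i) excludes when it forbids slot $L+l$). I would therefore prove exclusivity and completeness directly, tracking for each contributing tuple which slot plays the role of $l_1$ and whether $l_k$ realizes the second copy of $l$, and would treat the boundary indices $l=1$ and $l=L$ separately, where one admissible slot range degenerates and the corresponding binomial coefficient vanishes. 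Once the per-$(l,k)$ count is certified, the remaining work is routine: applying the binomial theorem to $\sum_k(-\eta)^k\binom{\cdot}{k-1}$ and $\sum_k(-\eta)^k\binom{\cdot}{k-2}$ converts the counts into the closed-form powers of $(1-\eta)$ that feed into Theorem~\ref{thm:main}.
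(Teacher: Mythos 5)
You follow essentially the same route as the paper's own proof---reduce the coefficient of $\phi_l\phi_l^\top$ to counting slot subsets by their endpoint values, split into the three configurations, and finish with the binomial theorem---but the step you yourself single out as the crux (exhaustiveness of the cases) is exactly where the argument breaks, and it cannot be patched. Your cases (i)--(iii) miss the tuples in which $l_1$ occupies the left $l$-slot, the right $l$-slot is selected as an \emph{intermediate} index $l_j$ with $1<j<k$, and $l_k$ lies strictly to the right of slot $L+l$. Each such tuple contributes $\tfrac12\phi_l\phi_l^\top$ through its first endpoint, yet it is excluded from case (i) (slot $L+l$ is forbidden outright, not just as an endpoint), from case (ii) (its first selected slot is the left copy, not the right), and from case (iii) (the right copy is not the last selected slot). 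The paper's proof has the identical omission: in its cases (a)(1) and (b)(1) the second copy of $l$ is barred from all of $l_2,\dots,l_k$, when only the endpoint $l_k$ needs to be barred to avoid double counting. So your proposal faithfully reproduces the published argument, including its flaw.

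Carrying out the ``exclusivity and completeness'' check you promise would in fact reveal that the claimed equality \eqref{eq:comb} is false, not merely unproven. The full count of $k$-subsets whose first selected slot carries value $l$ is $\binom{L+l-1}{k-1}+\binom{L-l}{k-1}$ (the first slot is one of the two copies of $l$, and everything to its right is then unconstrained), which by Pascal's rule exceeds the claimed coefficient by $\binom{L+l-2}{k-2}-\binom{2l-2}{k-2}\ge 0$, strictly for instance when $k=3$ and $l<L$. Concretely, take $L=2$, $k=3$, $l=1$, with value array $[2,1,1,2]$: the four order-three terms of $\Gamma_L^\top\Gamma_L$ come from the slot sets $\{1,2,3\},\{1,2,4\},\{1,3,4\},\{2,3,4\}$, and after the relaxation of Lemma~\ref{lem:relax} the left-hand side of \eqref{eq:comb} carries coefficient $1$ on $\phi_1\phi_1^\top$ (one half each from $\{1,2,3\}$ and $\{2,3,4\}$, whose non-endpoint slot is the other copy of $1$), while the right-hand side gives $\binom{1}{2}+\binom{1}{2}+\binom{0}{1}=0$. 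Any correct version of the lemma must use the coefficient $\binom{L+l-1}{k-1}+\binom{L-l}{k-1}$ (or weaken the equality to a suitable inequality with a verified direction), and the closed form in Lemma~\ref{lem:combi-weighted}, hence the constants in Theorem~\ref{thm:main}, would then have to be recomputed from it.
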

Lemma~\ref{lem:combi-cases} demonstrates the process of simplifying the complex summation \(\sum_{l_1, \ldots, l_k}\) into a more manageable form \(\sum_{l=1}^L\). This transformation significantly simplifies the task of obtaining a tighter upper bound.

\begin{lemma}   \label{lem:combi-weighted}
For $\eta \in (0,1)$ and $L > 1$, the following holds: \\$\sum_{k=2}^{2L}(-\eta)^k\left(\binom{L+l-2}{k-1} +\binom{L-l}{k-1}+\binom{2l-2}{k-2}\right)
  =(1 - \eta)^{L+l-2} +  (1 - \eta)^{L-l}  +  \eta^2 (1 - \eta)^{2l-2} +\eta(2L-2) -2$.
\end{lemma}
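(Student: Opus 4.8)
The statement is a closed-form evaluation of a finite alternating sum of binomial coefficients, so the natural tool is the binomial theorem $(1-\eta)^{n}=\sum_{j=0}^{n}\binom{n}{j}(-\eta)^{j}$. The plan is to use linearity to split the left-hand side into the three independent sums
\[
S_1=\sum_{k=2}^{2L}(-\eta)^k\binom{L+l-2}{k-1},\qquad S_2=\sum_{k=2}^{2L}(-\eta)^k\binom{L-l}{k-1},\qquad S_3=\sum_{k=2}^{2L}(-\eta)^k\binom{2l-2}{k-2},
\]
evaluate each in closed form, and recombine. Each summand is an alternating binomial; once the lower index of the binomial is aligned with the exponent of $(-\eta)$ by a shift of the summation variable, every sum reduces to a truncated binomial expansion of $(1-\eta)^{n}$ for the appropriate $n$.

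Concretely, for $S_1$ I would substitute $j=k-1$ to obtain $S_1=-\eta\sum_{j=1}^{2L-1}(-\eta)^{j}\binom{L+l-2}{j}$, and for $S_3$ substitute $j=k-2$ and pull out $\eta^2$ to obtain $S_3=\eta^2\sum_{j=0}^{2L-2}(-\eta)^{j}\binom{2l-2}{j}$; the sum $S_2$ is handled identically to $S_1$ with $L-l$ replacing $L+l-2$. A useful preliminary observation is that the upper cutoff $2L$ is lossless: since $\binom{n}{j}=0$ once $j>n$, and each of $L+l-2$, $L-l$, $2l-2$ is at most $2L-2$ for $1\le l\le L$, truncating the expansion at $2L$ (equivalently $2L-1$ or $2L-2$ after the shift) discards only zero terms. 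Hence each shifted sum equals the full expansion $(1-\eta)^{n}$ minus the finitely many low-order terms that the lower limit $k=2$ excludes.

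Recombining $S_1+S_2+S_3$ is then the final step: the three binomial-theorem main terms are meant to supply the powers $(1-\eta)^{L+l-2}$, $(1-\eta)^{L-l}$, and $\eta^2(1-\eta)^{2l-2}$, while the excluded low-order terms supply the affine-in-$\eta$ remainder that should collapse to $\eta(2L-2)-2$. This boundary accounting is exactly the step I expect to be delicate, and it is the main obstacle: the three sums carry different shift conventions ($k-1$ for $S_1,S_2$ versus $k-2$ for $S_3$), so the precise set of $k$-values falling below the limit $k=2$ differs across them, and the resulting constant and linear-in-$\eta$ contributions must be tracked individually rather than absorbed. To fix signs and off-by-one choices before committing to the general manipulation, I would first evaluate both sides at small instances such as $(L,l)=(2,1)$ and $(3,2)$ and confirm agreement there, since any mismatch in the claimed closed form would surface precisely in this low-order bookkeeping.
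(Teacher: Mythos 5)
Your decomposition and index shifts are exactly right (and are the same route the paper intends), but the recombination you anticipate is inconsistent with your own shifts, and carrying your plan out to the end actually \emph{disproves} the stated identity rather than proving it. With your substitution $j=k-1$, the factor you pulled out does not go away: since $\binom{L+l-2}{j}=0$ for $j>L+l-2$ and $L+l-2\le 2L-2$, you get $S_1=-\eta\sum_{j=1}^{2L-1}\binom{L+l-2}{j}(-\eta)^{j}=-\eta\left((1-\eta)^{L+l-2}-1\right)=\eta-\eta(1-\eta)^{L+l-2}$, and similarly $S_2=\eta-\eta(1-\eta)^{L-l}$, while $S_3=\eta^2(1-\eta)^{2l-2}$. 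Hence the correct closed form is
\begin{equation}
\sum_{k=2}^{2L}(-\eta)^k\left(\binom{L+l-2}{k-1}+\binom{L-l}{k-1}+\binom{2l-2}{k-2}\right)
= 2\eta-\eta(1-\eta)^{L+l-2}-\eta(1-\eta)^{L-l}+\eta^2(1-\eta)^{2l-2},
\end{equation}
in which the ``main terms'' carry a prefactor $-\eta$ and no affine remainder $\eta(2L-2)-2$ ever appears. This does not match the lemma's right-hand side, and the very sanity check you proposed, $(L,l)=(2,1)$, exposes the discrepancy: on the left only $k=2$ survives and all three binomials equal $1$, giving $3\eta^2$, whereas the claimed right-hand side evaluates to $2(1-\eta)+\eta^2+2\eta-2=\eta^2$.

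The reason the lemma has this form is that the paper's own proof commits precisely the off-by-one bookkeeping you flagged as the delicate step: it rewrites $\sum_{k=2}^{2L}\binom{L+l-2}{k-1}(-\eta)^k$ as $\sum_{k=2}^{L+l-2}\binom{L+l-2}{k}(-\eta)^k$, i.e.\ it re-indexes the binomial coefficient without shifting the exponent of $(-\eta)$ (and drops the $j=1$ boundary term), thereby manufacturing $(1-\eta)^{L+l-2}+\eta(L+l-2)-1$; the same error is repeated for the second sum, while the third sum is handled correctly by pulling out $\eta^2$, just as you do. So your method is sound and more careful than the paper's; completed honestly, it establishes the corrected identity displayed above and shows Lemma~\ref{lem:combi-weighted} is false as stated. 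Be aware that this is not a cosmetic issue: the incorrect closed form feeds directly into the bound of Theorem~\ref{thm:main} and the convergence statement built on it, which would need to be re-derived from the corrected expression.
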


The proof of Lemma~\ref{lem:combi-weighted} is presented in detail in Appendix~\ref{sec:combi-weighted-proof}, where we utilize the Binomial theorem. To ensure that the oscillatory term $(-\eta)^k$ does not cause divergence, we require the learning rate $\eta$ to lie within the interval $\eta \in (0,1)$.

\section{Sample Complexity of Reverse Experience Replay-Based $Q$-Learning on Linear MDPs} \label{sec:final-theorem}

The convergence analysis assumes that every sub-trajectory of length $L$ is almost (or asymptotically) independent of each other with high probability. This condition, known as the mixing requirement for Markovian data, implies that the statistical dependence between two sub-trajectories $\tau_L$ and $\tau'_L$ diminishes as they become further apart along the trajectory~\citep{DBLP:conf/icml/TagortiS15,DBLP:conf/nips/NagarajWB0N20}.

Prior work~\citep{DBLP:conf/nips/LeeCC19} provided a convergence proof for the Reverse Experience Replay (RER) approach but did not address the rate of convergence, primarily due to the challenges associated with quantifying deep neural networks. By contrast, Linear MDPs (defined in Definition~\ref{def:Linear}), which approximate the reward function and transition kernel linearly via features, allow for an asymptotic performance analysis of RER. Recently, \citet{DBLP:conf/iclr/Agarwal2022} presented the first theoretical proof for RER. However, their analysis is limited by stringent conditions, notably requiring a minimal learning rate $\eta L \le \frac{1}{3}$. This constraint suggests that RER may struggle to compete with plain Experience Replay (ER) when using larger learning rates.

To address this challenge, we provide a tighter theoretical analysis of the RER method in Theorem~\ref{thm:main}. Our analysis mitigate the constraints on the learning rate for convergence. We demonstrate that the convergence rate can be improved with a larger learning rate and a longer sequence of state-action-reward tuples, thus bridging the gap between theoretical convergence analysis and empirical learning results.

\begin{algorithm}[!t]
\caption{Episodic Q-learning with Reverse Experience Replay} \label{algo:RER}
\begin{algorithmic}[1]
\Require{Sequence length $L$ of consecutive state-action tuples; Replay buffer $\mathcal{B}$; Total learning episodes $T$; Target network update frequency $N$.}
\Ensure{The best-learned policy.}
\For{$t= 1$ \textbf{to} $T$}
    \State Act by  $\epsilon$-greedy strategy \textit{w.r.t.} policy $\pi$. 
     \State Save the new trajectory into the replay buffer $\mathcal{B}$.
     \State Retrieve a sub-trajectory $\tau_{L}$ from buffer $\mathcal{B}$, where $\tau_{l}:=(s_{l},a_{l},r_{l})$,  for all $ 1\le l\le L$.
     \For{$l=1$ \textbf{to} $ L$} \Comment{reverse experience replay}
     \State ${\varepsilon\leftarrow r_{L-l}+\gamma\max_{a'\in\mathcal{A}}Q(s_{L+1-l},a';\theta_k)-Q_{L+1-l}}$ 
      \State
     ${w_{t,l+1}\leftarrow w_{t,l}+\eta\varepsilon\nabla Q_{t,L+1-l}}$
     \EndFor
     \If{ $t\mod N =0$}  \Comment{online target update}
     \State $\theta_{k}\leftarrow w_{t,L+1}$ 
     \State $k\leftarrow k+1$
     \EndIf
    \State $\pi(s)\leftarrow\arg\max_{a\in\mathcal{A}},Q(s,a;w_{t,L+1})$, for all $s\in\mathcal{S}$. \Comment{policy extraction}
\EndFor
\State \textbf{Return} The converged policy $\pi$. 
\end{algorithmic}
\end{algorithm}

\begin{lemma}[Bias and variance decomposition] \label{lemma:error-decomp}
Let the error terms for every parameter $w$ as the difference between empirical estimation and true MDP: $\varepsilon_i(w)\coloneqq {Q}(s_i,a_i)-Q^*(s_i,a_i)$. For the current iteration $t$, the  difference between current estimated parameter $w$ and the optimal parameter $w^*$ accumulated along the $L$ length transitions with reverse update is:
\begin{align}
w_{L}-w^*=\underbrace{\Gamma_{L}\left(w_{1}-w^*\right)}_{\text{Bias term}}+\underbrace{\eta\sum_{l=1}^L \varepsilon_l\Gamma_{l-1}\phi_l}_{\text{variance term}}.
\end{align}
For clarity, $\Gamma_L$ in Definition~\ref{def:gamma} is a joint product of $L$ terms involving the feature vector of the consecutive state-action tuples.  When the norm of $\Gamma_L$ is small, the parameter will quickly converge to its optimal.

The first part on RHS is noted as the bias and the second part on RHS is variance along the sub-trajectory, which we will later show with zero mean.
\end{lemma}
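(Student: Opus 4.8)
The plan is to specialize the RER update in Algorithm~\ref{algo:RER} to the linear setting and then unroll the resulting recursion into a closed form. Under Assumption~\ref{def:Linear} we have $Q^{w}(s,a)=\langle w,\phi(s,a)\rangle$, so the gradient in line~7 is simply $\phi_{L+1-l}$ and the per-step update at reverse step $l$ is the linear temporal-difference step
\[
w_{l+1}=w_{l}+\eta\bigl(y_{L+1-l}-\langle w_{l},\phi_{L+1-l}\rangle\bigr)\phi_{L+1-l},
\]
where $y_{L+1-l}$ is the empirical Bellman target formed with the held-fixed target parameters. First I would subtract $w^{*}$, set $\delta_{l}:=w_{l}-w^{*}$, and insert $\pm\langle w^{*},\phi_{L+1-l}\rangle$ inside the TD error, using the fixed-point identity $Q^{*}(s,a)=\langle w^{*},\phi(s,a)\rangle$ to identify the target residual $\varepsilon_{l}=y_{L+1-l}-\langle w^{*},\phi_{L+1-l}\rangle$ with the error term in the lemma. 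This collapses the update into the affine one-step recursion
\[
\delta_{l+1}=\bigl(\identity-\eta\,\phi_{L+1-l}\phi_{L+1-l}^{\top}\bigr)\delta_{l}+\eta\,\varepsilon_{l}\,\phi_{L+1-l},
\]
whose multiplicative factor is precisely the $l$-th factor in the recursive definition of $\Gamma$ in Definition~\ref{def:gamma}.

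Then I would establish the closed form by induction on the number of reverse steps. The base case is immediate from $\Gamma_{0}=\identity$. For the inductive step I would left-multiply the accumulated expression by the new factor $\identity-\eta\,\phi_{L+1-l}\phi_{L+1-l}^{\top}$: this folds the homogeneous part into $\Gamma_{l}(w_{1}-w^{*})$ via the recursion $\Gamma_{l}=(\identity-\eta\,\phi_{L+1-l}\phi_{L+1-l}^{\top})\Gamma_{l-1}$, while simultaneously promoting every previously accumulated error coefficient by one factor and appending the freshly generated term $\eta\,\varepsilon_{l}\phi_{L+1-l}$ with the empty product $\Gamma_{0}=\identity$ as its coefficient. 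After the full sweep over all $L$ reverse steps, the term associated with index $l$ carries the product of the $l-1$ update matrices applied at the subsequent reverse steps, which the recursive definition of $\Gamma$ lets me telescope into $\Gamma_{l-1}$; collecting everything yields the bias term $\Gamma_{L}(w_{1}-w^{*})$ together with the forced sum $\eta\sum_{l=1}^{L}\varepsilon_{l}\Gamma_{l-1}\phi_{l}$. The zero-mean interpretation of the variance term is deferred, so for this lemma I only need the deterministic identity; I would nonetheless note that $\expect[\varepsilon_{l}\mid\cdot]=0$ follows later from the zero inherent Bellman error in Assumption~\ref{asump:phi-kappa}, since then the empirical backup is an unbiased estimate of $\langle w^{*},\phi\rangle$.

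I expect the main obstacle to be the index bookkeeping rather than the algebra. The sweep processes transitions from $L+1-l$ downward while $\Gamma$ is itself defined through the reversed index $\phi_{L+1-l}$, so the delicate point is to confirm that the matrix product accumulated during unrolling telescopes into \emph{exactly} $\Gamma_{l-1}$ for the forced terms (and into $\Gamma_{L}$ for the homogeneous term), rather than into a product with shifted indices. Keeping the reverse processing order, the feature relabeling $\phi_{L+1-l}$, and the left-versus-right direction of multiplication mutually consistent throughout the induction is where I would spend the most care; once the ordering is pinned down, both the bias coefficient $\Gamma_{L}$ and the per-step coefficients $\Gamma_{l-1}$ drop out of the recursion directly.
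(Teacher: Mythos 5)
Your proposal is correct and follows essentially the same route as the paper's proof: specialize the RER update to the linear setting, subtract $w^*$ to obtain the one-step affine recursion $\delta_{l+1}=(\identity-\eta\,\phi_{L+1-l}\phi_{L+1-l}^{\top})\delta_{l}+\eta\,\varepsilon\,\phi_{L+1-l}$, and unroll it so the homogeneous part telescopes into $\Gamma_{L}$ and the forced terms into $\Gamma_{l-1}\phi_{l}$. The only cosmetic difference is that you introduce the noise term by inserting $\pm\langle w^{*},\phi_{L+1-l}\rangle$ into the TD error, whereas the paper reaches the identical recursion by right-multiplying the Bellman optimality equation by $\eta\,\phi_{L+1-l}$ and subtracting.
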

The proof is presented in Appendix~\ref{sec:error-decompose}. The result is obtained by unrolling the terms for consecutive $L$ steps in reverse update order according to Lines 5-7 in Algorithm~\ref{algo:RER}. This allows us to separately quantify the upper bound the bias term and the variance terms.

\begin{lemma}[Bound on the bias term]\label{lem:contract2}
Let $\mathbf{x}\in \mathbb{R}^d$ be a non-zero vector and $N$ is the frequency for the target network to be updated. For $\eta\in(0,1)$ and $L >1$,   the following matrix's positive semi-definite inequality holds with probability at least $1-\delta$:
\begin{align}
    \mathbb{E}\norm{\prod^{1}_{j=N}\Gamma_{L}\mathbf{x}}_{\phi}^2&\le \exp\left(-\frac{(\eta (4-2L)  -\eta^2+ 1)NL}{\kappa}\right)\sqrt{\tfrac{\kappa}{\delta}}\norm{\mathbf{x}}_{\phi}.
\end{align}
The $\phi$-based norm is defined in Definition~\ref{def:phi}.
\begin{proof}[Sketch of proof] The result is obtained first expand the joint product over $\prod_{j=N}^i$ over $\Gamma_L$ and integrate the result in Theorem~\ref{thm:main}. The detailed proof is presented in Appendix~\ref{apx:bias-term-bound}.
\end{proof}
\end{lemma}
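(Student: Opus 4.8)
The plan is to treat the composite operator $\prod_{j=N}^{1}\Gamma_L$ as an $N$-fold composition of matrices $\Gamma_L^{(1)},\dots,\Gamma_L^{(N)}$, each built from a distinct length-$L$ sub-trajectory drawn between two consecutive target-network updates. Invoking the mixing hypothesis stated at the opening of Section~\ref{sec:final-theorem}, I will treat these sub-trajectories as (asymptotically) independent draws from the stationary distribution $\mu$, so that the freshest factor $\Gamma_L^{(j)}$ is independent of everything accumulated before it. The backbone of the argument is then a one-step contraction in the Euclidean norm: for any vector $\mathbf{v}$ fixed by the past, independence together with Theorem~\ref{thm:main} gives $\mathbb{E}\norm{\Gamma_L\mathbf{v}}_2^2=\mathbf{v}^\top\mathbb{E}[\Gamma_L^\top\Gamma_L]\mathbf{v}\le(1-\tfrac{aL}{\kappa})\norm{\mathbf{v}}_2^2$, where $a=\eta(4-2L)-(1-\eta)^{L-1}-\eta^2+1$ is the coefficient produced by Theorem~\ref{thm:main}.

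First I would set up the two norm-comparison inequalities that let me move between $\norm{\cdot}_\phi$ and $\norm{\cdot}_2$. From Definition~\ref{def:phi} together with $\phi^\top\phi\le 1$ (Assumption~\ref{asump:phi-kappa}) we get $\norm{\mathbf{v}}_\phi=\sup_{(s,a)}|\langle\phi(s,a),\mathbf{v}\rangle|\le\norm{\mathbf{v}}_2$; and from the curvature lower bound $\mathbb{E}_{\mu}[\phi\phi^\top]\succeq\identity/\kappa$, so that the supremum dominates the $\mu$-average, we get the reverse estimate $\norm{\mathbf{v}}_2\le\sqrt{\kappa}\,\norm{\mathbf{v}}_\phi$. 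These two bounds are precisely what inject the factor $\sqrt{\kappa}$ into the final statement.

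Next I would iterate the one-step contraction. Writing $P_i=\prod_{j=i}^{1}\Gamma_L^{(j)}$ and conditioning on $P_{i-1}\mathbf{x}$, the tower property and the one-step contraction give $\mathbb{E}\norm{P_i\mathbf{x}}_2^2\le(1-\tfrac{aL}{\kappa})\,\mathbb{E}\norm{P_{i-1}\mathbf{x}}_2^2$, so by induction $\mathbb{E}\norm{P_N\mathbf{x}}_2^2\le(1-\tfrac{aL}{\kappa})^N\norm{\mathbf{x}}_2^2$. Using $1+t\le e^{t}$ converts this product into the exponential $\exp(-\tfrac{aNL}{\kappa})$ advertised in the statement, absorbing the lower-order $(1-\eta)^{L-1}$ contribution into the decay rate. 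Finally I would pass from this expectation bound to the high-probability form by Markov's inequality applied to the nonnegative random variable $\norm{P_N\mathbf{x}}_2^2$: choosing the threshold equal to $\mathbb{E}\norm{P_N\mathbf{x}}_2^2/\delta$ yields, with probability at least $1-\delta$, a bound carrying the $1/\delta$ factor; combining with the two norm comparisons above reinstates $\norm{\cdot}_\phi$ and produces the stated $\sqrt{\kappa/\delta}$ prefactor.

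The main obstacle I anticipate is justifying the sequential independence that makes the tower and induction argument valid: the factors $\Gamma_L^{(j)}$ come from regions of a single Markov trajectory, so they are only approximately independent, and making the mixing argument rigorous, with the probability budget $\delta$ correctly apportioned across the $N$ update periods, is the delicate part. A secondary bookkeeping difficulty is reconciling the powers of the norms and of $\kappa,\delta$ between the squared left-hand side and the first-power right-hand side, i.e.\ tracking whether the square root lands on $\kappa/\delta$ or on the decay factor; I would resolve this by carrying the Euclidean bound all the way through and applying the $\sqrt{\kappa}$ conversion and the Markov step only at the very end.
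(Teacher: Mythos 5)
Your proposal follows essentially the same route as the paper's own proof in Appendix~\ref{apx:bias-term-bound}: a one-step contraction from Theorem~\ref{thm:main}, conditioning on the past blocks plus induction over the $N$ target-network periods, conversion of $(1-aL/\kappa)^N$ to an exponential, the $\sqrt{\kappa}$ norm comparison between $\norm{\cdot}_\phi$ and $\norm{\cdot}_2$, and finally Markov's inequality to obtain the $\sqrt{\kappa/\delta}$ factor. Your handling of the $(1-\eta)^{L-1}$ term (absorbing it into the decay rate) and of the $\kappa,\delta$ power bookkeeping matches the paper's treatment, including its level of rigor, so there is nothing substantive to add.
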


In terms of the bound for the variance term in Lemma~\ref{lemma:error-decomp}, even though the term $\Gamma_l$ is involved in the expression, it turns out we do not need to modify the original proof and thus we follow the result in the original work. The exact statement is presented in the Appendix~\ref{apx:variance-bound}.

\begin{theorem}\label{thm:main-cvg}
For Linear MDP, assume the reward function, as well as the feature, is bounded $R(s,a)\in[0,1]$, $\|\phi(s,a)\|_2\le 1$, for all $(s,a)\in\mathcal{S}\times\mathcal{A}$. Let $T$ be the maximum learning episodes, $N$ be the frequency of the target network update, $\eta$ be the learning rate and $L$ be the length of sequence for RER described in Algorithm~\ref{algo:RER}. When $\eta\in(0,1), L\ge 1$, with sample complexity
\begin{align}
\mathcal{O}\left(\frac{\gamma^{T/N}}{1-\gamma}+\sqrt{\frac{T\kappa}{N\delta(1-\gamma)^4}}\textcolor{blue}{\exp\left(-\frac{(\eta (4-2L)   -\eta^2+1)NL}{\kappa}\right)}+\sqrt{\frac{\eta \log(\frac{T}{N\delta})}{(1-\gamma)^4}}\right),
\end{align}
$\|Q_T(s,a)-Q^*(s,a)\|_{\infty}\le\varepsilon$  holds with probability at least $1-\delta$.
\begin{proof}[Sketch of Proof]
We first establish the independence of sub-trajectories of length $L$. We then decompose the error term of the $Q$-value using bias-variance decomposition (as shown in Lemma~\ref{lemma:error-decomp}), where the RER method and target network help control the variance term using martingale sequences. The upper bound for the bias term is given in Lemma~\ref{lem:contract2} and the upper bound for the variance term is presented in Lemma~\ref{lem:linear_variance}. Finally, we summarize the results and provide the complete proof in Lemma~\ref{lem:final}, leading to the probabilistic bound in this theorem.
\end{proof}
\end{theorem}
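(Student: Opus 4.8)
The plan is to track the parameter error $w - w^*$ through the two nested loops of Algorithm~\ref{algo:RER}: the inner reverse-update loop of length $L$ and the outer loop punctuated by target-network synchronizations every $N$ episodes. The three summands in the claimed sample complexity arise from three distinct sources, so I would organize the argument to isolate each. First I would invoke Lemma~\ref{lemma:error-decomp} to write the per-epoch error as a bias term $\Gamma_{L}(w_1 - w^*)$ plus a zero-mean variance term $\eta\sum_{l=1}^{L}\varepsilon_l\Gamma_{l-1}\phi_l$. Because the target parameters are held fixed for $N$ consecutive episodes, chaining this decomposition across those episodes multiplies the bias contributions into the joint product $\prod_{j=N}^{1}\Gamma_L$, while the variance contributions remain a sum of martingale increments that I can handle separately.

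For the bias I would apply Lemma~\ref{lem:contract2} directly. Its key input is Theorem~\ref{thm:main}, whose positive-semidefinite bound on $\mathbb{E}_{(s,a)\sim\mu}[\Gamma_L^\top\Gamma_L]$ is raised to the $N$-th power and exponentiated to produce the decay factor $\exp\left(-\tfrac{(\eta(4-2L)-\eta^2+1)NL}{\kappa}\right)$. Combined with the $\sqrt{\kappa/\delta}$ concentration prefactor and the conversion from the $\phi$-norm of Definition~\ref{def:phi} to the sup-norm on $Q$, this yields the second term $\sqrt{T\kappa/(N\delta(1-\gamma)^4)}\,\exp(\cdots)$. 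The outer contraction is handled on its own: each synchronization effectively reapplies the $\gamma$-contractive Bellman operator once, so over the $T/N$ synchronizations the residual initial error contracts geometrically, giving the first term $\gamma^{T/N}/(1-\gamma)$, with the $1/(1-\gamma)$ absorbing the value-range normalization.

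For the variance I would first establish the near-independence of length-$L$ sub-trajectories through the mixing of the induced Markov chain, which is what licenses treating the feature products under the stationary distribution $\mu$ when invoking Theorem~\ref{thm:main} and the covariance lower bound of Assumption~\ref{asump:phi-kappa}. Conditioned on the target parameters, the increments $\varepsilon_l\Gamma_{l-1}\phi_l$ form a martingale-difference sequence, so a Freedman- or Azuma-type concentration inequality controls the accumulated noise with high probability; this is the content of Lemma~\ref{lem:linear_variance}. A union bound over the $T/N$ synchronization windows contributes the $\log(T/(N\delta))$ factor and the single-step learning-rate scaling contributes the $\eta$, together yielding the third term $\sqrt{\eta\log(T/(N\delta))/(1-\gamma)^4}$. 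Summing the three contributions and solving for the smallest $T$ that drives the total below $\varepsilon$ completes the statement, as assembled in Lemma~\ref{lem:final}.

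The main obstacle I anticipate is the interface between the stationary-distribution analysis of Theorem~\ref{thm:main} and the correlated, Markovian trajectory actually fed to the algorithm: both Theorem~\ref{thm:main} and Lemma~\ref{lem:contract2} are stated under $\mathbb{E}_{(s,a)\sim\mu}$, but the realized sub-trajectories are neither i.i.d.\ nor exactly stationary, so bridging this gap demands a quantitative mixing argument together with a careful martingale construction so that the bias contraction and the variance concentration each hold at the stated confidence level. A secondary delicate point is composing the two decay mechanisms, the within-window exponential decay of $\Gamma_L$ and the across-window geometric $\gamma$-contraction, without double-counting, so that the bias and variance terms accumulate correctly over all $T/N$ windows.
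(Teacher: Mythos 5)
Your proposal follows essentially the same route as the paper: the bias--variance decomposition of Lemma~\ref{lemma:error-decomp} chained over the $N$ episodes between target-network syncs, the bias contraction via Lemma~\ref{lem:contract2} (i.e., Theorem~\ref{thm:main} iterated $N$ times and exponentiated, with the $\sqrt{\kappa/\delta}$ Markov-inequality prefactor), the martingale-based variance control of Lemma~\ref{lem:linear_variance}, the geometric $\gamma^{T/N}$ decay from the outer target-network contraction, and the final assembly in Lemma~\ref{lem:final}. Your identification of the three terms' origins and of the mixing issue as the main technical interface matches the paper's treatment, which likewise defers the quantitative mixing and martingale machinery to the cited prior work.
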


Compared to the original theorem in \citet[Theorem 1]{DBLP:conf/iclr/Agarwal2022}, our work provides a tighter upper bound and relaxes the assumptions needed for the result to hold. This advancement bridges the gap between theoretical justification and empirical MDP evaluation. Furthermore, we hope that the new approach of transforming the original problem into a combinatorial counting problem will inspire further research in related domains.

We acknowledge that the main structure of the convergence proof (i.e., Theorem~\ref{thm:main-cvg}) follows the original work. Our contribution lies in presenting a cleaner proof pipeline and incorporating our tighter bound as detailed in Theorem~\ref{thm:main}.




\section{Conclusion}

In this work, we gave a tighter finite-sample analysis for heuristics which are heavily used in practical
$Q$-learning and showed that seemingly simple modifications can have far-reaching consequences
in linear MDP settings.  We provide a rigorous analysis that relaxes the limitation on the learning rate and the length of the consecutive tuples. Our key idea is to transform the original problem involving a giant summation into a combinatorial counting problem, which greatly simplifies the whole problem.
Finally, we show theoretically that RER converges faster with a larger learning rate $\eta$ and a longer consecutive sequence $L$ of state-action-reward tuples.

\subsubsection*{Acknowledgments}
\label{sec:ack}
We thank all the reviewers for their constructive comments. We also thank Yi Gu for his feedback on the theoretical justification part of this paper.
This research was supported by NSF grant CCF-1918327 and DOE – Fusion Energy Science grant: DE-SC0024583.


\bibliography{reference}

\begin{thebibliography}{28}
\providecommand{\natexlab}[1]{#1}
\providecommand{\url}[1]{\texttt{#1}}
\expandafter\ifx\csname urlstyle\endcsname\relax
  \providecommand{\doi}[1]{doi: #1}\else
  \providecommand{\doi}{doi: \begingroup \urlstyle{rm}\Url}\fi

\bibitem[Agarwal et~al.(2022)Agarwal, Chaudhuri, Jain, Nagaraj, and
  Netrapalli]{DBLP:conf/iclr/Agarwal2022}
Naman Agarwal, Syomantak Chaudhuri, Prateek Jain, Dheeraj Nagaraj, and Praneeth
  Netrapalli.
\newblock Online target q-learning with reverse experience replay: Efficiently
  finding the optimal policy for linear mdps.
\newblock In \emph{{ICLR}}. OpenReview.net, 2022.

\bibitem[Ambrose et~al.(2016)Ambrose, Pfeiffer, and Foster]{AMBROSE20161124}
R.~Ellen Ambrose, Brad~E. Pfeiffer, and David~J. Foster.
\newblock Reverse replay of hippocampal place cells is uniquely modulated by
  changing reward.
\newblock \emph{Neuron}, 91\penalty0 (5):\penalty0 1124--1136, 2016.
\newblock ISSN 0896-6273.

\bibitem[Andrychowicz et~al.(2017)Andrychowicz, Crow, Ray, Schneider, Fong,
  Welinder, McGrew, Tobin, Abbeel, and
  Zaremba]{DBLP:conf/nips/AndrychowiczCRS17}
Marcin Andrychowicz, Dwight Crow, Alex Ray, Jonas Schneider, Rachel Fong, Peter
  Welinder, Bob McGrew, Josh Tobin, Pieter Abbeel, and Wojciech Zaremba.
\newblock Hindsight experience replay.
\newblock In \emph{{NIPS}}, pp.\  5048--5058, 2017.

\bibitem[Bai et~al.(2021)Bai, Wang, Han, Hao, Garg, Liu, and
  Wang]{DBLP:conf/icml/BaiWHHG0W21}
Chenjia Bai, Lingxiao Wang, Lei Han, Jianye Hao, Animesh Garg, Peng Liu, and
  Zhaoran Wang.
\newblock Principled exploration via optimistic bootstrapping and backward
  induction.
\newblock In \emph{{ICML}}, volume 139 of \emph{Proceedings of Machine Learning
  Research}, pp.\  577--587. {PMLR}, 2021.

\bibitem[Bertsekas \& Yu(2012)Bertsekas and Yu]{DBLP:journals/mor/BertsekasY12}
Dimitri~P. Bertsekas and Huizhen Yu.
\newblock Q-learning and enhanced policy iteration in discounted dynamic
  programming.
\newblock \emph{Math. Oper. Res.}, 37\penalty0 (1):\penalty0 66--94, 2012.

\bibitem[Florensa et~al.(2017)Florensa, Held, Wulfmeier, Zhang, and
  Abbeel]{DBLP:conf/corl/FlorensaHWZA17}
Carlos Florensa, David Held, Markus Wulfmeier, Michael Zhang, and Pieter
  Abbeel.
\newblock Reverse curriculum generation for reinforcement learning.
\newblock In \emph{CoRL}, volume~78 of \emph{Proceedings of Machine Learning
  Research}, pp.\  482--495. {PMLR}, 2017.

\bibitem[Foster \& Wilson(2006)Foster and Wilson]{foster2006reverse}
David~J Foster and Matthew~A Wilson.
\newblock Reverse replay of behavioural sequences in hippocampal place cells
  during the awake state.
\newblock \emph{Nature}, 440\penalty0 (7084):\penalty0 680--683, 2006.

\bibitem[Goyal et~al.(2019)Goyal, Brakel, Fedus, Singhal, Lillicrap, Levine,
  Larochelle, and Bengio]{DBLP:conf/iclr/GoyalBFSLLLB19}
Anirudh Goyal, Philemon Brakel, William Fedus, Soumye Singhal, Timothy~P.
  Lillicrap, Sergey Levine, Hugo Larochelle, and Yoshua Bengio.
\newblock Recall traces: Backtracking models for efficient reinforcement
  learning.
\newblock In \emph{{ICLR}}. OpenReview.net, 2019.

\bibitem[Haddad(2021)]{haddad2021repeated}
Roudy~El Haddad.
\newblock Repeated sums and binomial coefficients.
\newblock \emph{arXiv preprint arXiv:2102.12391}, 2021.

\bibitem[Igata et~al.(2021)Igata, Ikegaya, and
  Sasaki]{doi:10.1073/pnas.2011266118}
Hideyoshi Igata, Yuji Ikegaya, and Takuya Sasaki.
\newblock Prioritized experience replays on a hippocampal predictive map for
  learning.
\newblock \emph{Proceedings of the National Academy of Sciences}, 118\penalty0
  (1), 2021.

\bibitem[Jain et~al.(2021)Jain, Kowshik, Nagaraj, and
  Netrapalli]{DBLP:conf/nips/Jain2021}
Prateek Jain, Suhas~S. Kowshik, Dheeraj Nagaraj, and Praneeth Netrapalli.
\newblock Streaming linear system identification with reverse experience
  replay.
\newblock In \emph{{NIPS}}, volume~34, pp.\  30140--30152. Curran Associates,
  Inc., 2021.

\bibitem[Kumar et~al.(2020)Kumar, Gupta, and Levine]{DBLP:conf/nips/Kumar0L20}
Aviral Kumar, Abhishek Gupta, and Sergey Levine.
\newblock Discor: Corrective feedback in reinforcement learning via
  distribution correction.
\newblock In \emph{NeurIPS}, volume~33, pp.\  18560--18572, 2020.

\bibitem[Lahire et~al.(2022)Lahire, Geist, and
  Rachelson]{DBLP:conf/icml/LahireGR22}
Thibault Lahire, Matthieu Geist, and Emmanuel Rachelson.
\newblock Large batch experience replay.
\newblock In \emph{{ICML}}, volume 162 of \emph{Proceedings of Machine Learning
  Research}, pp.\  11790--11813. {PMLR}, 2022.

\bibitem[Lee et~al.(2019)Lee, Choi, and Chung]{DBLP:conf/nips/LeeCC19}
Su~Young Lee, Sung{-}Ik Choi, and Sae{-}Young Chung.
\newblock Sample-efficient deep reinforcement learning via episodic backward
  update.
\newblock In \emph{NeurIPS}, pp.\  2110--2119, 2019.

\bibitem[Li et~al.(2022)Li, Wei, Chi, Gu, and Chen]{DBLP:journals/tit/LiWCGC22}
Gen Li, Yuting Wei, Yuejie Chi, Yuantao Gu, and Yuxin Chen.
\newblock Sample complexity of asynchronous q-learning: Sharper analysis and
  variance reduction.
\newblock \emph{{IEEE} Trans. Inf. Theory}, 68\penalty0 (1):\penalty0 448--473,
  2022.

\bibitem[Lin(1992)]{DBLP:journals/ml/Lin92}
Long~Ji Lin.
\newblock Self-improving reactive agents based on reinforcement learning,
  planning and teaching.
\newblock \emph{Mach. Learn.}, 8:\penalty0 293--321, 1992.

\bibitem[Mnih et~al.(2015)Mnih, Kavukcuoglu, Silver, Rusu, Veness, Bellemare,
  Graves, Riedmiller, Fidjeland, Ostrovski, Petersen, Beattie, Sadik,
  Antonoglou, King, Kumaran, Wierstra, Legg, and Hassabis]{nature2015dqn}
Volodymyr Mnih, Koray Kavukcuoglu, David Silver, Andrei~A. Rusu, Joel Veness,
  Marc~G. Bellemare, Alex Graves, Martin Riedmiller, Andreas~K. Fidjeland,
  Georg Ostrovski, Stig Petersen, Charles Beattie, Amir Sadik, Ioannis
  Antonoglou, Helen King, Dharshan Kumaran, Daan Wierstra, Shane Legg, and
  Demis Hassabis.
\newblock Human-level control through deep reinforcement learning.
\newblock \emph{Nature}, 518\penalty0 (7540):\penalty0 529--533, 2015.

\bibitem[Nagaraj et~al.(2020)Nagaraj, Wu, Bresler, Jain, and
  Netrapalli]{DBLP:conf/nips/NagarajWB0N20}
Dheeraj Nagaraj, Xian Wu, Guy Bresler, Prateek Jain, and Praneeth Netrapalli.
\newblock Least squares regression with markovian data: Fundamental limits and
  algorithms.
\newblock In \emph{NeurIPS}, 2020.

\bibitem[Puterman(1994)]{DBLP:books/wi/Puterman94}
Martin~L. Puterman.
\newblock \emph{Markov Decision Processes: Discrete Stochastic Dynamic
  Programming}.
\newblock Wiley Series in Probability and Statistics. Wiley, 1994.

\bibitem[Rotinov(2019)]{DBLP:journals/corr/Rotinov2019}
Egor Rotinov.
\newblock Reverse experience replay.
\newblock \emph{CoRR}, abs/1910.08780, 2019.

\bibitem[Saglam et~al.(2023)Saglam, Mutlu, {\c{C}}i{\c{c}}ek, and
  Kozat]{DBLP:journals/jair/SaglamMCK23}
Baturay Saglam, Furkan~B. Mutlu, Dogan~Can {\c{C}}i{\c{c}}ek, and Suleyman~S.
  Kozat.
\newblock Actor prioritized experience replay.
\newblock \emph{J. Artif. Intell. Res.}, 78:\penalty0 639--672, 2023.

\bibitem[Schaul et~al.(2016)Schaul, Quan, Antonoglou, and
  Silver]{DBLP:journals/corr/SchaulQAS15}
Tom Schaul, John Quan, Ioannis Antonoglou, and David Silver.
\newblock Prioritized experience replay.
\newblock In \emph{{ICLR}}, 2016.

\bibitem[Tagorti \& Scherrer(2015)Tagorti and
  Scherrer]{DBLP:conf/icml/TagortiS15}
Manel Tagorti and Bruno Scherrer.
\newblock On the rate of convergence and error bounds for lstd ($\lambda$).
\newblock In \emph{{ICML}}, volume~37 of \emph{{JMLR} Workshop and Conference
  Proceedings}, pp.\  1521--1529. JMLR.org, 2015.

\bibitem[van Hasselt et~al.(2016)van Hasselt, Guez, and
  Silver]{DBLP:conf/aaai/HasseltGS16}
Hado van Hasselt, Arthur Guez, and David Silver.
\newblock Deep reinforcement learning with double q-learning.
\newblock In \emph{{AAAI}}, pp.\  2094--2100. {AAAI} Press, 2016.

\bibitem[Vershynin(2018)]{vershynin2018high}
Roman Vershynin.
\newblock \emph{High-dimensional probability: An introduction with applications
  in data science}, volume~47.
\newblock Cambridge university press, 2018.

\bibitem[Watkins \& Dayan(1992)Watkins and Dayan]{DBLP:journals/ml/WatkinsD92}
Christopher J. C.~H. Watkins and Peter Dayan.
\newblock Technical note q-learning.
\newblock \emph{Mach. Learn.}, 8:\penalty0 279--292, 1992.

\bibitem[Watkins(1989)]{watkins1989learning}
Christopher John Cornish~Hellaby Watkins.
\newblock Learning from delayed rewards.
\newblock \emph{PhD thesis, King's College, University of Cambridge}, 1989.

\bibitem[Zanette et~al.(2020)Zanette, Lazaric, Kochenderfer, and
  Brunskill]{DBLP:conf/icml/ZanetteLKB20}
Andrea Zanette, Alessandro Lazaric, Mykel~J. Kochenderfer, and Emma Brunskill.
\newblock Learning near optimal policies with low inherent bellman error.
\newblock In \emph{{ICML}}, volume 119 of \emph{Proceedings of Machine Learning
  Research}, pp.\  10978--10989. {PMLR}, 2020.

\end{thebibliography}
\bibliographystyle{rlc}

\appendix
\clearpage

\section{Proof of Combinatorial Counting in Reverse Experience Replay} \label{apx:comb-count}
In this section, we present detailed proof for each of the necessary lemmas used for Theorem~\ref{thm:main}.

\subsection{Proof of Lemma~\ref{lem:relax}} \label{apx-relax-proof}

\begin{lemma-no}
Let $\mathbf{x}\in\mathbb R^d$ be any $d$-dimensional non-zero vector, i.e., $\mathbf{x}\neq \mathbf{0}$. For $1 \le l_1,\ldots, l_k \le L$ and $2\le k\le 2L$,  consider one high-order term $\phi_{l_1}\phi^\top_{l_1}\ldots\phi_{l_k}\phi^\top_{l_k}$ in Equation~\eqref{eq:gamma-gamma}.  We have:
\begin{align}
|\mathbf{x}^\top\phi_{l_1}\phi^\top_{l_1}\ldots\phi_{l_k}\phi^\top_{l_k}\mathbf{x}|&\le  \frac{1}{2}\mathbf{x}^\top\left( \phi_{l_1}\phi^\top_{l_1}+\phi_{l_k}\phi^\top_{l_k} \right)\mathbf{x},
\end{align}
where $|\cdot|$ computes the absolute value.
\begin{proof}
For $1 \le l_1,\ldots, l_k \le L$ and $2\le k\le 2L$, we consider one high-order term $\phi_{l_1}\phi^\top_{l_1}\ldots\phi_{l_k}\phi^\top_{l_k}$ in Equation~\eqref{eq:gamma-gamma}.  By the Cauchy-Schwarz inequality (shown in Lemma~\ref{lem:cauchy}), we have:
\begin{align}\label{eq:neighbor-inner-product}
|\phi^\top_{l_{j}}\phi_{l_{j+1}}|\le 1, \qquad \text{ for } 1\le j< k
\end{align}
This allows us to simplify the neighboring vector-vector inner product in the high-order term $\phi_{l_1}\left(\phi^\top_{l_1}\phi_{l_2}\right)\left(\phi^\top_{l_2}\phi_{l_3}\right)\ldots(\phi_{l_{k-1}}^\top\phi_{l_k})\phi^\top_{l_k}$. 

Let $\mathbf{x}\in\mathbb R^d$ be any $d$-dimensional non-zero vector, i.e., $\mathbf{x}\neq \mathbf{0}$. Based on the above inequality (in Equation~\ref{eq:neighbor-inner-product}), we have:
\begin{align}
|\mathbf{x}^\top\phi_{l_1}\phi^\top_{l_1}\ldots\phi_{l_k}\phi^\top_{l_k}\mathbf{x}|&\le |\mathbf{x}^\top \phi_{l_1} \phi^\top_{l_k} \mathbf{x}|  &\text{by Equation~\eqref{eq:neighbor-inner-product}}\nonumber\\
 & \le \frac{1}{2}\left(\left(\mathbf{x}^\top \phi_{l_1}\right)^2+ \left(\phi^\top_{l_k} \mathbf{x}\right)^2\right)   &\text{AM-GM inequality (in Lemma~\ref{lem:am-gm})}\nonumber\\
  & =\frac{1}{2}\left(\left( \phi_{l_1}^\top\mathbf{x}\right)^2+ \left(\phi^\top_{l_k} \mathbf{x}\right)^2\right)   \nonumber\\
&= \frac{1}{2}\left(\mathbf{x}^\top \phi_{l_1}\phi^\top_{l_1}\mathbf{x}+\mathbf{x}^\top\phi_{l_k}\phi^\top_{l_k}\mathbf{x} \right) \nonumber\\
&= \frac{1}{2}\mathbf{x}^\top\left( \phi_{l_1}\phi^\top_{l_1}+\phi_{l_k}\phi^\top_{l_k} \right)\mathbf{x},
\end{align}
where the third line holds because the inner product between two vectors equals its transpose.
\end{proof}
\end{lemma-no}

Note that the above Lemma~\ref{lem:relax} is extended from~\citep[Claim 4]{DBLP:conf/nips/Jain2021}.

\subsection{Proof of Lemma~\ref{lem:combi-cases}} \label{apx:combi-case-proof}

\begin{figure}[p]
  \centering
  \begin{subfigure}{\linewidth}
  \centering
    \includegraphics[width=\linewidth]{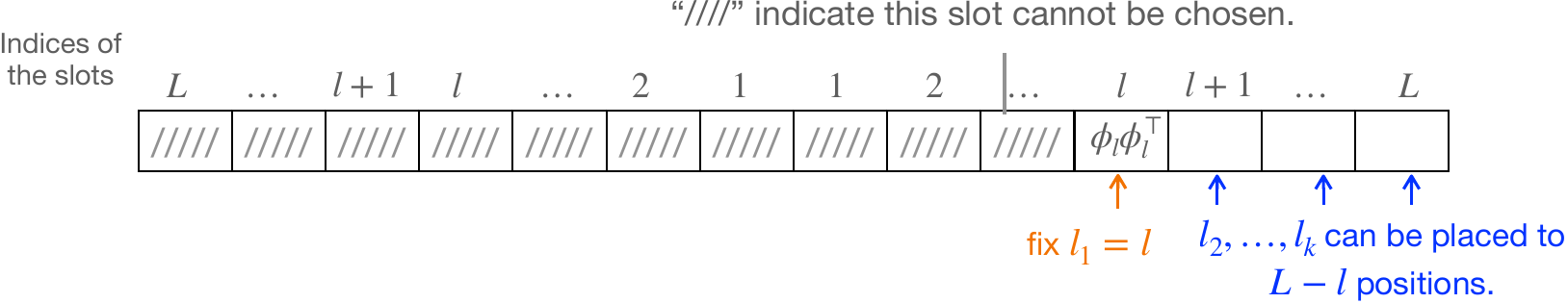}
    \caption{When $l_1$ selects the \textit{right} $l$-th slot, the terms $l_2, \ldots, l_k$ can only choose from the \textit{right} slots with indices $l+1, \ldots, L$, due to the sequential order constraint $l_1 \le l_2 \le \cdots \le l_k$. Thus, there are $L - l$ available slots for $l_2, \ldots, l_k$.}
    \label{fig:combi-case2}
  \end{subfigure}
  \vspace{10pt}\\
  \begin{subfigure}{\linewidth}
  \centering
    \includegraphics[width=0.99\linewidth]{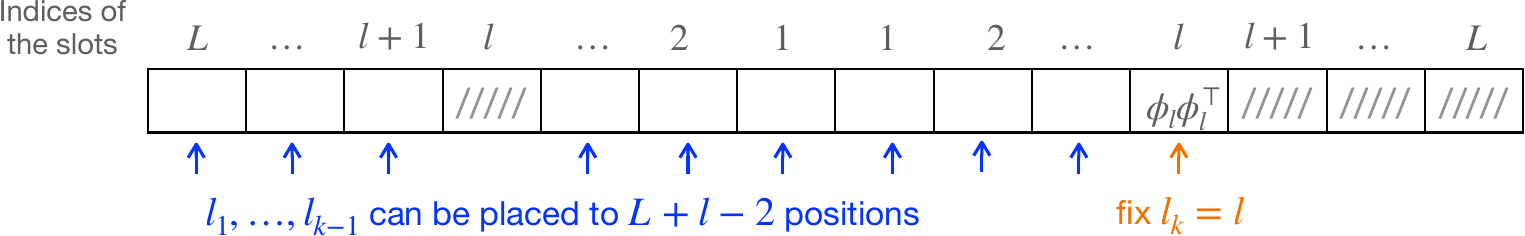}
    \caption{When $l_k$ selects the right $l$-th slot, $l_1, \ldots, l_{k-1}$ cannot choose from all the \textit{right} slots with indices $l+1, \ldots, L$ because of the sequential order constraint. To avoid double counting, the left $l$-th slot is also disallowed. Therefore, there are $L + l - 2$ slots available for $l_1, \ldots, l_{k-1}$.}
    \label{fig:combi-case3}
  \end{subfigure}
  \vspace{10pt}\\
  \begin{subfigure}{\linewidth}
  \centering
    \includegraphics[width=0.99\linewidth]{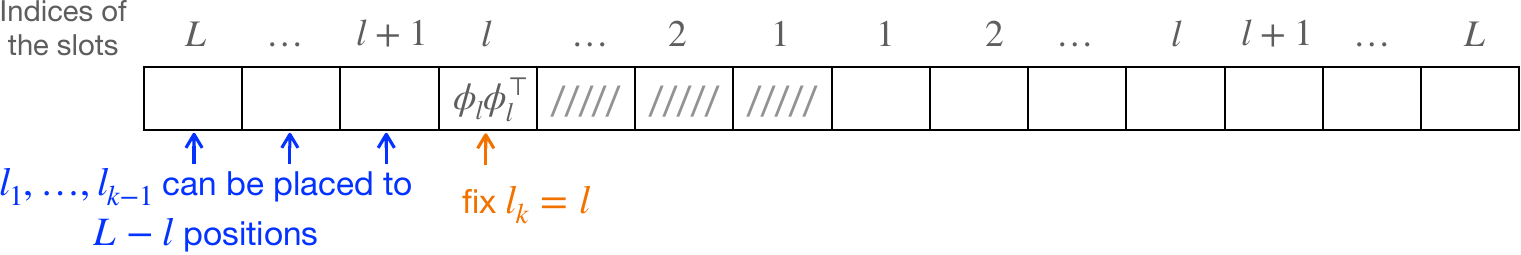}
    \caption{When $l_k$ selects the \textit{left} $l$-th slot, $l_1, \ldots, l_{k-1}$ can only choose from the \textit{left} slots with indices $L, L-1, \ldots, l+1$, due to the sequential order constraint. Similar to case~\ref{fig:combi-case2}, there are $L - l$ slots available for $l_1, \ldots, l_{k-1}$.}
    \label{fig:combi-case4}
  \end{subfigure}
  \vspace{10pt}\\
  \begin{subfigure}{\linewidth}
  \centering
    \includegraphics[width=0.99\linewidth]{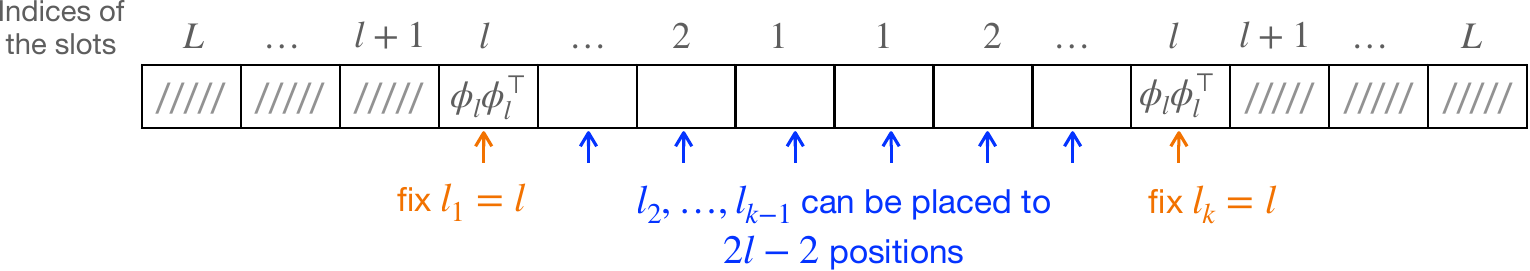}
    \caption{When $l_1$ selects the \textit{left} $l$-th slot and $l_k$ selects the \textit{right} $l$-th slot, the intermediate sequence $l_2, \ldots, l_{k-1}$ can only select from the intermediate slots with indices $l-1, \ldots, 2, 1, 1, 2, \ldots, l-1$. There are $2l - 2$ slots available for placing the $k-2$ intermediate elements.}
    \label{fig:comb-case5}
  \end{subfigure}
  \caption{Visualization of all cases for the combinatorial counting problem.}
  \label{fig:combination}
\end{figure}

\begin{lemma-no} 
Based on the relaxation in Lemma~\ref{lem:relax}, the weighted summation $\sum_{k=2}^{2L}(-\eta)^k\sum_{l_1,\ldots,l_k}$ in Equation~\eqref{eq:gamma-gamma} can be expanded combinatorially as follows:
\begin{align}
\sum_{k=2}^{2L}(-\eta)^k\sum_{l_1,\ldots,l_k}\frac{1}{2}( \phi_{l_1}\phi^\top_{l_1}+ \phi_{l_k}\phi^\top_{l_k} ) &= \underbrace{\sum_{k=2}^{2L}(-\eta)^k\sum_{l=1}^L\left(\binom{L+l-2}{k-1} +\binom{L-l}{k-1}+\binom{2l-2}{k-2}\right)}_{\text{sum over combinatorially many terms}}\phi_{l}\phi^\top_{l}.
\end{align}     

\begin{proof}
In the main paper, we outline the steps for transforming the problem of estimating the large summation (i.e., $\sum_{1\le l_1,\ldots,l_k\le L}$) involving many high-order terms (i.e., $\phi_{l_1}\phi^\top_{l_1}\ldots\phi_{l_k}\phi^\top_{l_k}$) into a combinatorial counting problem. Here, we present the details for the remaining cases beyond what was sketched in Figure~\ref{fig:combination-short}, where we focused on one instance of selecting $l_1$ and $l_k$.

More specifically, the element $\phi_{l}\phi^\top_{l}$ arises from the following cases:
\begin{enumerate}[label=(\alph*)]
    \item When $l_1$ selects the $l$-th slot and $l_k$ does \textit{not}, this includes two subcases:
    \begin{enumerate}[label=(\arabic*)]
        \item $l_1$ picks the left $l$-th slot, and the remaining $k-1$ elements (\textit{i.e.,} $l_2,\ldots,l_k$) are selected from a subarray of size $L + l - 2$. See Figure~\ref{fig:combination-short} for a visual example.
        \item $l_1$ picks the right $l$-th slot, and the remaining $k-1$ elements are selected from a subarray of size $L - l$. See Figure~\ref{fig:combi-case2} for a visual example.
    \end{enumerate}
    
    \item When $l_1$ does not pick the $l$-th slot but $l_k$ does, this is symmetric to the previous case and also includes two subcases:
    \begin{enumerate}[label=(\arabic*)]
        \item $l_k$ picks the right $l$-th slot, while the remaining $k-1$ elements (\textit{i.e.,} $l_1,\ldots,l_{k-1}$) are selected from a subarray of size $L + l - 2$. See Figure~\ref{fig:combi-case3}.
        \item $l_k$ picks the left $l$-th slot, and the remaining $k-1$ elements are selected from a subarray of size $L - l$. See Figure~\ref{fig:combi-case4}.
    \end{enumerate}
    
    \item When $l_1$ picks the left $l$-th slot and $l_k$ picks the right $l$-th slot, the intermediate $k-2$ elements (\textit{i.e.,} $l_2,\ldots,l_{k-1}$) are selected from a subarray of size $2l-2$. To ensure that the number of available slots is greater than the number of elements, we require $2l-2 \ge k-2$, which simplifies to $2l \ge k$. See Figure~\ref{fig:comb-case5} for a visual example.
    
\end{enumerate}

Note that cases (a) and (b) yield the same result, while case (c) counts twice because both the first and last elements are the $l$-th slot. Thus, we derive the final result:
\begin{align}
\sum_{k=2}^{2L}(-\eta)^k\sum_{l_1,\ldots,l_k}\frac{1}{2}( \phi_{l_1}\phi^\top_{l_1}+ \phi_{l_k}\phi^\top_{l_k} ) = \underbrace{\sum_{k=2}^{2L}(-\eta)^k\sum_{l=1}^L\left(\binom{L+l-2}{k-1} +\binom{L-l}{k-1}+\binom{2l-2}{k-2}\right)}_{\text{sum over combinatorially many terms}}\phi_{l}\phi^\top_{l}.
\end{align}
\end{proof}
\end{lemma-no}

The following Lemma~\ref{lem:combi-weighted} further simplify the whole summations with many different $n$ choose $k$ in Lemma~\ref{lem:combi-cases}. The obtained result will help us to mitigate the constraint $\eta L < 1/3$ in the final theorem (in Theorem~\ref{thm:main}). 

\subsection{Proof of Lemma~\ref{lem:combi-weighted}} \label{sec:combi-weighted-proof}
\begin{lemma-no}   
For $ \eta \in (0,1)$ and $L >1$, we have:
\begin{align}
  \sum_{k=2}^{2L}(-\eta)^k\left(\binom{L+l-2}{k-1} +\binom{L-l}{k-1}+\binom{2l-2}{k-2}\right)=&(1 - \eta)^{L+l-2} +  (1 - \eta)^{L-l}  +  \eta^2 (1 - \eta)^{2l-2} \\
  & +\eta(2L-2) -2.
\end{align}

\begin{proof}
We start by separating the sum into three parts:
\begin{align}
\sum_{k=2}^{2L} \binom{L+l-2}{k-1} (-\eta)^k + \sum_{k=2}^{2L} \binom{L-l}{k-1} (-\eta)^k + \sum_{k=2}^{2L} \binom{2l-2}{k-2} (-\eta)^k.
\end{align}
Then we use the Binomial expansion formula $n$: $(1-x)^n=\sum_{k=0}^n\binom{n}{k} (-x)^k$. This expression will converge when $|x|<1$.
Set $x=\eta$, we would have: 
\begin{align} 
(1-\eta)^{n}=1+\sum_{k=1}^{n}\binom{n}{k} (-\eta)^k=1-\eta n+\sum_{k=2}^{n}\binom{n}{k} (-\eta)^k. \label{eq:base-binomial}
\end{align}
Seting $n = L+l-2$ and $x = \eta $, we have the result for the first sum:
\begin{align}
\sum_{k=2}^{2L-1} \binom{L+l-2}{k} (-\eta)^k =\sum_{k=2}^{L+l-2} \binom{L+l-2}{k} (-\eta)^k = (1 - \eta)^{L+l-2}  +\eta(L+l-2)-1. 
 \label{eq:part1-combi-case1}
\end{align} 
Note that the binomial coefficient \(\binom{n}{k}\) is defined to be zero when \(k > n\). Because you cannot choose more elements than are available. In the above equation, \( \binom{L+l-2}{k-1} \) is zero for \( k-1 > L + l - 2 \), thus we limit the summation to \( k \leq L + l - 2 \).

Similarly, set $n = L-l$ and $x = \eta $, we have the result for the second sum:
\begin{align}
\sum_{k=2}^{2L}\binom{L-l }{k} (-\eta)^k=\sum_{k=2}^{L-l}\binom{L-l }{k} (-\eta)^k=(1-\eta)^{L-l}+\eta (L-l)-1.  \label{eq:part1-combi-case2}
\end{align} 
Similarly, \( \binom{L-l}{k-1} \) is zero for \( k-1 > L - l \), thus the summation is restricted to \( k \leq L - l \).

In terms of the third sum, we adjust the index and apply the binomial theorem,
\begin{align}
\sum_{k=2}^{2L} \binom{2l-2}{k-2} (-\eta)^k=\eta^2\sum_{k=0}^{2L-2} \binom{2l-2}{k} (-\eta)^{k} =\eta^2\sum_{k=0}^{2l-2} \binom{2l-2}{k} (-\eta)^{k} = \eta^2 (1 - \eta)^{2l-2}.
\end{align}
By combining all three simplified sums, we obtain the final result:
\begin{align}
    (1 - \eta)^{L+l-2} +  (1 - \eta)^{L-l}  +  \eta^2 (1 - \eta)^{2l-2} +\eta(2L-2) -2.
\end{align}
The above result holds when $|\eta|<1$. When $\eta >1$, the series could diverge due to the unbounded growth of the terms. Since the learning rate is always positive, the requirement becomes $\eta\in (0,1)$.
This completes the proof.
\end{proof}
\end{lemma-no}

We aim to establish upper and lower bounds for the expression in Lemma~\ref{lem:combi-weighted} that are independent of \( l \), the results are provided in the following Remark.

\begin{remark} \label{rem:upper-lower-bound}
 For \( 0 < l < L \) and \( \eta \in (0,1) \), the following inequalities hold:
\begin{align}
 (1 - \eta)^{L+l-2} +  (1 - \eta)^{L-l}  +  \eta^2 (1 - \eta)^{2l-2} &> (1 - \eta)^{2L-3} + (1 - \eta)^{L-1} + \eta^2 (1 - \eta)^{2L-4}, \\
 (1 - \eta)^{L+l-2} +  (1 - \eta)^{L-l}  +  \eta^2 (1 - \eta)^{2l-2} &< (1 - \eta)^{L-1} + \eta^2 + 1.
\end{align} 
Within the range \( \eta \in (0,1) \), both the upper and lower bounds are positive.

\begin{proof}
The term \( (1 - \eta)^{L+l-2} \) decreases as \( l \) increases since \( (1 - \eta) < 1 \) for \( \eta > 0 \). Therefore, the maximum occurs at \( l = 1 \), and the minimum at \( l = L-1 \).

Similarly, for \( (1 - \eta)^{L-l} \), the maximum occurs at \( l = L-1 \), and the minimum at \( l = 1 \). For the term \( \eta^2 (1 - \eta)^{2l-2} \), the maximum occurs when \( l = 1 \), and the minimum when \( l = L-1 \).

Thus, the upper bound of the entire expression is achieved by combining the maximum values of each term, resulting in \( (1 - \eta)^{L-1} + \eta^2 + 1 \). The lower bound is given by \( (1 - \eta)^{2L-3} + (1 - \eta)^{L-1} + \eta^2 (1 - \eta)^{2L-4} \).
\end{proof}
\end{remark}

\section{Theoretical Justification of Theorem~\ref{thm:main}} \label{apx:combi-count-proof}

\begin{theorem-no}
Let $\mu$ be the stationary distribution of the state-action pair in the MDP. The following matrix's positive semi-definite inequalities hold: for $\eta\in (0,1)$,
\begin{align} 
\mathbb{E}_{(s,a)\sim\mu} \left[\Gamma_{L}^\top \Gamma_{L}\right]\preceq \left(1-\frac{(\eta (4-2L)  -(1 - \eta)^{L-1} -\eta^2+1)L}{\kappa}\right)\identity.
\end{align}
where the matrix $\Gamma_L$ is defined in Definition~\ref{def:gamma}. Here ``$\preceq$'' is defined between two matrices on both sides (please see Definition~\ref{def:psd}) for the positive semi-definite property.
\end{theorem-no}

\begin{proof} Based on Equation~\eqref{eq:gamma-gamma}, we have:
\begin{align}
\mathbb{E}_{(s,a)\sim\mu}  \left[\Gamma_{L}^\top \Gamma_{L}\right]=\identity-2\eta\mathbb{E}_{(s,a)\sim\mu} \left[\sum_{l=1}^{L}\phi_{l}\phi^\top_{l}\right]+\mathbb{E}_{(s,a)\sim\mu} \left[\sum_{k=2}^{2L}(-\eta)^k\sum_{l_1,\ldots,l_k} \phi_{l_1}\phi^\top_{l_1}\ldots\phi_{l_k}\phi^\top_{l_k} \right].
\end{align}

 By linearity of expectation, the second part of Equation~\ref{eq:gamma-gamma} can be reformulated as 
 \begin{equation}
-2\eta\mathbb{E}_{(s,a)\sim\mu} \left[\sum_{l=1}^{L}\phi_{l}\phi^\top_{l}\right]=-2\eta\sum_{l=1}^{L}\mathbb{E}_{(s,a)\sim\mu} \left(\phi_{l}\phi^\top_{l}\right)=-2\eta L \mathbb{E}_{(s,a)\sim\mu} \left(\phi\phi^\top\right)\preceq \frac{-2\eta L}{\kappa} \identity.
 \end{equation}
The last step is obtained by Remark~\ref{rem:linear-expect}.

Based on the result in the proposed Lemma~\ref{lem:combi-cases}, Lemma~\ref{lem:combi-weighted} and Remark~\ref{rem:upper-lower-bound}, the second part of Equation~\ref{eq:gamma-gamma} can be upper bounded as:
\begin{align}
&\mathbb{E}_{(s,a)\sim\mu} \left[\sum_{k=2}^{2L}(-\eta)^k\sum_{l_1,\ldots,l_k}\frac{1}{2}( \phi_{l_1}\phi^\top_{l_1}+ \phi_{l_k}\phi^\top_{l_k}  ) \right]  \\
&=  \mathbb{E}_{(s,a)\sim\mu} \left[\sum_{k=2}^{2L}(-\eta)^k\sum_{l=1}^L\left(\binom{L+l-2}{k-1} +\binom{L-l}{k-1}+\binom{2l-2}{k-2}\right) \phi_{l}\phi^\top_{l}\right] &\text{by Lemma~\ref{lem:combi-cases}}\\
&=\mathbb{E}_{(s,a)\sim\mu} \left[ \sum_{l=1}^L\sum_{k=2}^{2L}\left(\binom{L+l-2}{k-1} +\binom{L-l}{k-1}+\binom{2l-2}{k-2}\right)(-\eta)^k\phi_{l}\phi^\top_{l}\right] &\text{swap two sums}\\
&=\mathbb{E}_{(s,a)\sim\mu} \left[ \sum_{l=1}^L\left((1 - \eta)^{L+l-2} +  (1 - \eta)^{L-l}  +  \eta^2 (1 - \eta)^{2l-2} +\eta(2L-2) -2\right)\phi_{l}\phi^\top_{l}\right] &\text{By Lemma~\ref{lem:combi-weighted}}\\
&\preceq \left(  (1 - \eta)^{L-1} +\eta^2 +\eta(2L-2)-1\right)\mathbb{E}_{(s,a)\sim\mu} \left[\sum_{l=1}^L\phi_{l}\phi^\top_{l}\right] &\text{By Remark~\ref{rem:upper-lower-bound}}\\
&\preceq\frac{(1 - \eta)^{L-1}L +\eta^2L +\eta(2L-2)L -L}{\kappa}\identity.
\end{align}
Combining the results in the above two inequalities, we finally have the upper bound:
\begin{align}
\mathbb{E}_{(s,a)\sim\mu} \left[\Gamma_{L}^\top \Gamma_{L}\right]\preceq \left(1-\frac{ (\eta (4-2L)  -(1 - \eta)^{L-1} -\eta^2+1)L}{\kappa}\right)\identity.
\end{align}
\end{proof}

Theorem~\ref{thm:main} together its theoretical justification is novel and never used in any analysis relevant to experience replay by the knowledge of the authors.

\section{Sample Complexity Proof of Theorem~\ref{thm:main-cvg}}
We acknowledge that the main structure of convergence proof follows the original work. Here, we made contribution to present a cleaner proof pipeline of the proof and also integrate our tighter bound in Theorem~\ref{thm:main}.
\subsection{Proof of Bias-Variance Decomposition for the Error in Lemma~\ref{lemma:error-decomp}} \label{sec:error-decompose}

\begin{lemma-no}
Let the error terms for every parameter $w$ as the difference between empirical estimation and true MDP: $\varepsilon_i(w)\coloneqq {Q}(s_i,a_i)-Q^*(s_i,a_i)$. For the current iteration $t$, the  difference between current estimated parameter $w$ and the optimal parameter $w^*$ accumulated along the $L$ length sub-trajectory with reverse update is:
\begin{align}
w_{L}-w^*=\underbrace{\Gamma_{L}\left(w_{1}-w^*\right)}_{\text{Bias term}}+\underbrace{\eta\sum_{l=1}^L \varepsilon_l\Gamma_{l-1}\phi_l}_{\text{variance term}}.
\end{align}
\end{lemma-no}
\begin{proof} As shown in Algorithm~\ref{algo:RER} in Lines 5-7,  we use a sampled sub-trajectory of length $L$ to execute $Q$-update reversely at iteration $t$: for $l=1,2,\ldots,L$,
\begin{align}
    w_{l+1}&=w_{l}+\eta \left(r_{L+1-l}+\gamma\max_{a'\in\mathcal{A}}\langle w_{1}, \phi(s_{L+2-l},a')\rangle-\langle w_{l},\phi_{L+1-l}\rangle\right)\phi_{L+1-l}\\
    &=\left(\identity-\eta \phi_{L+1-l}\phi^\top_{L+1-l}\right)w_{l}+\eta \left(r_{L+1-l}+\gamma\sup_{a'\in\mathcal{A}}\langle w_{1}, \phi(s_{L+2-l},a')\rangle\right)\phi_{L+1-l}
\end{align}
where $\identity$ denotes the identity matrix and $w_{1}$ is the parameter of the target network. The second equality is attained by rearranging the terms of the first equality. The $\max$ operator is changed to $\sup$ operator for rigorous analysis purposes. $\langle \cdot, \cdot\rangle$ means inner dot product between two vectors of the same dimension.

Let $Q^*(s_i,a_i)$ be the optimal $Q$-value at state $s_i$ taking action $a_i$ and assume $w^*$ is the optimal parameter, the Bellman optimality equation is written as:
\begin{equation*}
Q^*(s_i,a_i)= R_i+\gamma\mathbb{E}_{s'\sim P(\cdot|s_i,a_i)}\sup_{a'\in\mathcal{A}}\langle w^*, \phi(s',a')\rangle
\end{equation*}
Define the error term $\varepsilon_i(w)$ for parameter $w$ and $i$-th tuple $(s_i,a_i,r_i)$  as the difference between empirical estimation and true probabilistic MDP:
\begin{align}
    \varepsilon_i(w)&\coloneqq {Q}(s_i,a_i)-Q^*(s_i,a_i)\\ &=\left({r}_i+\gamma{\sup_{a'\in\mathcal{A}}\langle w, \phi(s_{i+1}},a')\rangle\right)-\left(R_i+\gamma\mathbb{E}_{s'\sim P(\cdot|s_i,a_i)}\sup_{a'\in\mathcal{A}}\langle w, \phi(s',a')\rangle\right) \\
    &=\left({r}_i-R_i\right)-\gamma\left({\sup_{a'\in\mathcal{A}}\langle w, \phi(s_{i+1}},a')\rangle- \mathbb{E}_{s'\sim P(\cdot|s_i,a_i)}\sup_{a'\in\mathcal{A}}\langle w, \phi(s',a')\rangle\right) \label{eq:error}
\end{align}   
For all $ l=1\ldots,L$, apply the Bellman optimality equation over the RER algorithm over the optimal parameter $w^*$:
\begin{align}
\langle w^*,\phi_{L+1-l}\rangle=R_{L+1-l}+\gamma\mathbb{E}_{s'\sim P(\cdot|s_{L+1-l},a_{L+1-l})}\sup_{a'\in\mathcal{A}}\langle w_{1}, \phi(s',a')\rangle
\end{align}
Right-multiply the above equation on both sides with term $\eta \phi_{L+1-l}$ and combine with the first equation in this proof, we shall get
\begin{align}
w_{l+1}-w^*=\left(\identity-\eta \phi_{L+1-l}\phi^\top_{L+1-l}\right)\left(w_{l}-w^*\right)+\eta\varepsilon_{L+1-l}\phi_{L+1-l}
\end{align}
where the error term $\varepsilon_{L+1-l}$ is defined in Equation~\eqref{eq:error}. Combined with Definition~\ref{def:gamma} and recursively expand the RHS, we shall get the difference \textit{w.r.t.} the optimal one  after reversely updating $L$ consecutive steps:
\begin{align}
w_{L+1}-w^*&=\Gamma_{L}\left(w_{1}-w^*\right)+\eta\sum_{l=1}^L\varepsilon_{l}\Gamma_{l-1}\phi_{l}
\end{align}
The proof is thus finished.
\end{proof}


\begin{remark} \label{rem:bias-var}
Suppose we execute Algorithm~\ref{algo:RER} for $N$ iterations, we would get:
\begin{align}\label{eq:full-error}
w_{N,L}-w^*&=\underbrace{\prod_{t=N}^1\Gamma_{L}\left(w_{1}-w^*\right)}_{\text{Bias term}}+\underbrace{\eta\sum_{i=1}^{N}\prod_{t=N}^{i+1}\Gamma^l_{L}H^j}_{\text{Variance term}}
\end{align}
where $H^j:=\eta\sum_{i=1}^L \varepsilon_i\Gamma^j_{i-1}\phi_i^j$.
The first term on RHS is noted as the bias, which decays geometrically with $N$ and the second term on RHS is variance along the sub-trajectory of length $L$, which we will later show it has zero mean.
\end{remark}
Note that the above Remark~\ref{rem:bias-var} is relevant to the \citep[Lemma 5]{DBLP:conf/iclr/Agarwal2022}.

\subsection{Bound the Bias Term in Remark~\ref{rem:bias-var}} \label{apx:bias-term-bound}
 We briefly mention here to quantify the upper bound of ${\prod_{t=1}^{N}\Gamma_{L}}$ in Lemma~\ref{lemma:error-decomp}. In comparison to~\citep[lemma 8]{DBLP:conf/iclr/Agarwal2022}, the following Lemma~\ref{lem:contract2}  require $\eta\in(0,1)$ instead of $\eta L<1/3$, which is a relaxation of the original work.
\begin{lemma-no}
Let $\mathbf{x}\in \mathbb{R}^d$ be a non-zero vector and $N$ is the frequency for the target network to be updated. For $\eta\in(0,1), L\in \mathbb{N}$ and $L >1$,   the following matrix's positive semi-definite inequality holds:
\begin{align}
    \mathbb{E}\norm{\prod^{1}_{j=N}\Gamma_{L}\mathbf{x}}^2&\le \exp\left(-\frac{N(\eta (4-2L)L +L  -\eta^2L)}{\kappa}\right) \norm{\mathbf{x}}^2.
\end{align}
Furthermore, the following inequality holds with probability at least $1-\delta$:
\begin{align}
    \mathbb{E}\norm{\prod^{1}_{j=N}\Gamma_{L}\mathbf{x}}_{\phi}^2&\le \exp\left(-\frac{N(\eta (4-2L)L +L  -\eta^2L)}{\kappa}\right)\sqrt{\tfrac{\kappa}{\delta}}\norm{\mathbf{x}}_{\phi}.
\end{align}
The $\phi$-based norm is defined in Definition~\ref{def:phi}.
\end{lemma-no}

\begin{proof} 
In Theorem~\ref{thm:main}, we notice the term $(1-\eta)^{L-1}$ will converge to zero exponentially and thus are omitted under sufficient precision. Thus, we obtain a simplified form
\begin{align}
\mathbb{E}_{(s,a)\sim\mu} \left(\Gamma_{L}^\top \Gamma_{L}\right)&\preceq \left(1-\frac{ \eta (4-2L)L +L -(1 - \eta)^{L-1}L -\eta^2L}{\kappa}\right)\identity\\
 & \preceq\left(1-\frac{\eta (4-2L)L +L  -\eta^2L}{\kappa}\right)\identity. \label{eq:expect_contraction} 
\end{align}
Now, we observe that: 
\begin{equation}\label{eq:sq_norm}
\norm{\prod_{j=N}^{1}\Gamma_{L}\mathbf{x}}^2 = \mathbf{x}^{\top}\prod_{j=1}^{N-1}(\Gamma^j_{L})^{\top}\underbrace{\left(\left(\Gamma^{N}_{L}\right)^{\top}\Gamma^{N}_{L}\right)}_{\text{is independent of the rest}}\prod_{j=N-1}^{1}\Gamma^j_{L} \mathbf{x}
\end{equation}
Therefore, we take the expectation conditioned on $\Gamma^{j}_{L}$ for $j \leq N-1$ in Equation~\eqref{eq:sq_norm}:
\begin{align}
    \mathbb{E}\norm{\prod_{j=N}^{1}\Gamma^j_{L}\mathbf{x}}^2 \leq \left(1-\frac{ \eta (4-2L)L +L  -\eta^2L}{\kappa}\right)\mathbb{E}\norm{\prod_{j=N-1}^{1}\Gamma^j_{L}\mathbf{x}}^2 & \quad \text{ using Equation~\eqref{eq:expect_contraction}}
\end{align}
Applying the equation above inductively, we conclude the result:
\begin{align}
    \mathbb{E}\norm{\prod_{j=N}^{1}\Gamma^j_{L}\mathbf{x}}^2 &\leq \left(1-\frac{ \eta (4-2L)L +L  -\eta^2L}{\kappa}\right)^N\norm{\mathbf{x}}^2\\
    &\approx \exp\left(-\frac{N(\eta (4-2L)L +L  -\eta^2L)}{\kappa}\right) \norm{\mathbf{x}}^2.
\end{align}
The last step of numerical approximation is obtained by $\lim_{N\to +\infty}(1+\frac{x}{N})^N= \exp(x)$.

We then extend to $\phi$-based norm (in Definition~\ref{def:phi}) as follow: 
\begin{align}
 \norm{\prod_{j=N}^{1}\Gamma_{L}\mathbf{x}}_{\phi} \leq \sqrt{\kappa} \norm{\prod_{j=N}^{1}\Gamma^{j}_{L}\mathbf{x}} &\leq {\exp\left(-\frac{N(\eta (4-2L)L +L  -\eta^2L)}{\kappa}\right)} \sqrt{\kappa} \norm{\mathbf{x}}
\end{align}
Thus, by Markov's inequality, with probability at least $ 1-\delta$, the following event holds:
\begin{equation}
 \norm{\prod_{j=N}^{1}\Gamma_{L}\mathbf{x}}_{\phi} \leq \exp\left(-\frac{N(\eta (4-2L)L +L  -\eta^2L)}{\kappa}\right)\sqrt{\tfrac{\kappa}{\delta}}\norm{\mathbf{x}}_{\phi}
\end{equation}
\end{proof}

\subsection{Bound the Variance Term in Remark~\ref{rem:bias-var}}  \label{apx:variance-bound}
Even though the term $\Gamma_l$ is involved in the expression, it turns out we do not need to modify the original proof and thus we follow the result in the original proof. Please see~\citep[Appendix L.3]{DBLP:conf/iclr/Agarwal2022} for the original proof steps.
\begin{theorem-no}[\cite{DBLP:conf/iclr/Agarwal2022} Theorem 4]\label{lem:linear_variance}
Suppose $\mathbf{x},\mathbf{w}\in \mathbb{R}^d$ are fixed. There exists a universal constant $C$ such that the following event holds with probability at least $1-\delta$: 
 \begin{equation}\label{eq:lin_var_conc}
 \biggr|\langle \mathbf{x},\sum_{t=1}^{N}\prod_{j=N}^{t+1}\Gamma^{j}_{L} H^{j}(\mathbf{w})\rangle\biggr| \leq C\norm{\mathbf{x}}\left(1+\norm{\mathbf{w}}_{\phi}\right)\sqrt{\eta \log\left(\frac{2}{\delta}\right)}
 \end{equation}
 \end{theorem-no}
By a direct application of \citep[Theorem 8.1.6]{vershynin2018high}, we derive the following corollary. Notice that $C$ is the covering number defined in Definition~\ref{def:cover_num}.
 
\begin{corollary}
For fixed parameter $\mathbf{x},\mathbf{w}\in \mathbb{R}^d$, there exists a constant $C$ such that the following inequality holds with probability at least $1-\delta$:
\begin{equation}
\norm{\eta\sum_{i=1}^{N}\prod_{t=N}^{i+1}\sum_{l=1}^L\varepsilon_{l}(\mathbf{w})\Gamma_{l-1}\phi_{l}}_\phi \le C(1+\norm{\mathbf{w}}_\phi)\left(C_{\Phi}+\sqrt{\log\left(\frac{2}{\delta}\right)}\right)   
\end{equation}
The notation $\|\cdot\|_{\phi}$ is mentioned in Definition~\ref{def:phi}.
Here $C_\Phi$ is the covering number with $L_2$ norm in $\mathbb{R}^d$, see Theorem~\ref{theorem:dudley} for details.
 \end{corollary}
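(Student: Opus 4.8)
The plan is to lift the fixed-direction concentration bound of Theorem~\ref{lem:linear_variance} to a bound that is uniform over the feature class, via a covering/chaining argument. Write $V := \eta\sum_{i=1}^{N}\prod_{t=N}^{i+1}\sum_{l=1}^L\varepsilon_{l}(\mathbf{w})\Gamma_{l-1}\phi_{l} \in \mathbb{R}^d$ for the (random) variance vector. By Definition~\ref{def:phi} we have $\norm{V}_\phi = \sup_{(s,a)}|\langle \phi(s,a), V\rangle|$, so the quantity to control is the supremum of the stochastic process $\{X_\phi := \langle \phi(s,a), V\rangle\}$ indexed by the feature vectors $\phi(s,a)$, which lie in the unit ball of $\mathbb{R}^d$ under Assumption~\ref{asump:phi-kappa}.

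First I would verify that this process has sub-Gaussian increments. Setting $\mathbf{x} = \phi - \phi'$ in Theorem~\ref{lem:linear_variance} and using that the right-hand side scales linearly in $\norm{\mathbf{x}}$, the increment $X_\phi - X_{\phi'} = \langle \phi - \phi', V\rangle$ is sub-Gaussian with parameter proportional to $\norm{\phi - \phi'}_2\,(1+\norm{\mathbf{w}}_\phi)\sqrt{\eta}$. This identifies the relevant metric on the index set as the (scaled) Euclidean metric and places the process in the regime where Dudley's chaining inequality applies.

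Next I would invoke Theorem~\ref{theorem:dudley} (the chaining tail bound, i.e. Theorem 8.1.6 of \citet{vershynin2018high}) for the process $\{X_\phi\}$. The chaining integral of the metric entropy of the feature class contributes the covering-number term $C_\Phi$ from Definition~\ref{def:cover_num}, while the high-probability part of the chaining bound contributes the additive $\sqrt{\log(2/\delta)}$ term. Absorbing the factor $\sqrt{\eta}<1$ and the diameter bound $\norm{\phi}_2 \le 1$ into the universal constant $C$, and factoring out the common $(1+\norm{\mathbf{w}}_\phi)$, yields exactly the stated inequality with probability at least $1-\delta$.

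The main obstacle is certifying the sub-Gaussian increment condition rigorously. Theorem~\ref{lem:linear_variance} is phrased for a single fixed $\mathbf{x}$, so I must confirm that it in fact supplies a uniform sub-Gaussian tail for \emph{every} increment direction with the correct $\norm{\phi-\phi'}_2$ scaling, rather than merely one pointwise tail; this is precisely what licenses the chaining step and hence the appearance of $C_\Phi$. Once the process is established to be sub-Gaussian with respect to the scaled Euclidean metric, the remainder is a routine application of the chaining theorem together with bookkeeping of the constants.
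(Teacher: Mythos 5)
Your proposal follows essentially the same route as the paper: the paper's entire justification is that the corollary follows from the fixed-direction bound of Theorem~\ref{lem:linear_variance} "by a direct application of Theorem 8.1.6 of \citet{vershynin2018high}," i.e.\ exactly the chaining argument over the feature class that you spell out, with the entropy integral producing $C_\Phi$ and the tail parameter producing $\sqrt{\log(2/\delta)}$. Your write-up is in fact more careful than the paper's, since you explicitly flag the need to upgrade the pointwise tail of Theorem~\ref{lem:linear_variance} to a sub-Gaussian increment condition before chaining, a step the paper leaves implicit.
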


\subsection{Overall Sample Complexity Analysis}
\begin{lemma} \label{lem:final}
There exists constants $C_\Phi,C_1,C_2$, such that: 
\begin{enumerate}
    \item $C_1\kappa\log \left(\frac{T\kappa}{N\delta(1-\gamma)}\right)< N$;
    \item $\eta\le C_2\frac{(1-\gamma)^2}{C^2_{\Phi}+\log \left(\frac{T}{N\delta}\right)}$.
\end{enumerate}
Let $1\le k\le \frac{T}{N}$, where $k$ is an index for the target network and $T/N$ is the total number of target network updates. The following holds with probability at least $1-\delta$: 
\begin{enumerate}
    \item For the target network, $\|\theta_k\|\le\frac{4}{1-\gamma}$;
    \item For the error accumulated along $L$-consecutive steps of reverse update,
    \begin{equation*}
    \begin{aligned}
    \|w_{k,L}-w^*\|_{\phi}\le &\sqrt{\frac{25T\kappa}{N\delta(1-\gamma)^2}}\exp\left(-\frac{ N((\eta (4-2L)L +L  -\eta^2L))}{\kappa}\right) \\
    &+\sqrt{\frac{\eta \left(C^2_\Phi+\log\left(\frac{T}{N\delta}\right)\right)}{(1-\gamma)^2}}
    \end{aligned}
\end{equation*}

\end{enumerate}
When we combine the above two cases, we have:
\begin{equation*}
\begin{aligned}
\|Q^T-Q^*\|_{\infty}\le&\mathcal{O}\left(\frac{\gamma^{T/N}}{1-\gamma}\right)\\
    &+\mathcal{O}\left(\sqrt{\frac{T\kappa}{N\delta(1-\gamma)^4}}\exp\left(-\frac{N(\eta (4-2L)L +L  -\eta^2L)}{\kappa}\right)\right)\\
    &+\mathcal{O}\left(\sqrt{\frac{\eta \left(C^2_\Phi+\log\left(\frac{T}{N\delta}\right)\right)}{(1-\gamma)^4}}\right)   
\end{aligned}
\end{equation*}
We can obtain the sample complexity of the whole learning framework by setting the RHS as $\varepsilon$ and we shall recover the result we show in Theorem~\ref{thm:main-cvg}.
\end{lemma}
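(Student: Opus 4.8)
The plan is to convert the per-update bias--variance decomposition of Remark~\ref{rem:bias-var} into a recursion across the $T/N$ target-network updates, bounding the bias term through Lemma~\ref{lem:contract2} and the variance term through the corollary to Theorem~\ref{lem:linear_variance}, and then closing the argument by an induction on the target index $k$ that keeps the iterates bounded. The two hypotheses of the lemma play complementary roles: condition~(1) forces $N$ large enough that the bias factor $\exp(-(\eta(4-2L)L+L-\eta^2L)N/\kappa)$ is negligible, while condition~(2) forces $\eta$ small enough that the variance contribution stays below a constant.

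First I would establish $\|\theta_k\|\le 4/(1-\gamma)$ by induction on $k$. Writing $\theta_k=w_{k,L}$ and inserting the decomposition $w_{k,L}-w^*=\text{(bias)}+\text{(variance)}$, the inductive step bounds the bias using the inductive hypothesis $\|\theta_{k-1}-w^*\|_\phi=\mathcal{O}(1/(1-\gamma))$ together with Lemma~\ref{lem:contract2}, and bounds the variance by the corollary; both are kept small by conditions~(1) and~(2), so that $\|\theta_k\|$ remains inside the ball of radius $4/(1-\gamma)$ (using also $\|w^*\|_\phi\le 1/(1-\gamma)$ from reward boundedness). A union bound over $k=1,\ldots,T/N$, replacing $\delta$ by $N\delta/T$ in each invocation of the concentration bounds, yields the high-probability statement and explains the $T/(N\delta)$ factors appearing later under the square roots.

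Second, with the iterates controlled, I would bound $\|w_{k,L}-w^*\|_\phi$ term by term. Lemma~\ref{lem:contract2} bounds the bias term $\prod_{t=N}^1\Gamma_L(w_1-w^*)$ by $\exp(-(\eta(4-2L)L+L-\eta^2L)N/\kappa)\sqrt{\kappa}\,\|w_1-w^*\|_\phi$; substituting the union-bounded confidence and $\|w_1-w^*\|_\phi\le 5/(1-\gamma)$ reproduces the $\sqrt{25T\kappa/(N\delta(1-\gamma)^2)}\exp(\cdots)$ term, while the corollary gives the $\sqrt{\eta(C_\Phi^2+\log(T/(N\delta)))/(1-\gamma)^2}$ term. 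To reach the final $Q$-value bound I would then invoke the linear-MDP identity $\|Q^T-Q^*\|_\infty=\sup_{(s,a)}|\langle w_T-w^*,\phi(s,a)\rangle|=\|w_T-w^*\|_\phi$ (Definition~\ref{def:Linear}), and observe that each target update contracts the outstanding Bellman error by $\gamma$ while injecting one per-update error; unrolling this recursion over $T/N$ updates produces the geometric factor $\gamma^{T/N}$ on the initial error and a geometric sum $\sum_j\gamma^j=1/(1-\gamma)$ on the accumulated per-update errors, turning the $(1-\gamma)^2$ denominators into $(1-\gamma)^4$. Equating the three-term sum to $\varepsilon$ recovers Theorem~\ref{thm:main-cvg}.

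I expect the main obstacle to be the simultaneous, uniform control demanded by the induction. The variance concentration holds only for a \emph{fixed} pair $(\mathbf{x},\mathbf{w})$, whereas the iterates are data dependent, so I must cover the parameter ball by an $\varepsilon$-net (hence the covering number $C_\Phi$) and union-bound over it to obtain a uniform guarantee; I must then reconcile conditions~(1) and~(2) so that bias and variance both stay below the threshold that preserves $\|\theta_k\|\le 4/(1-\gamma)$ at \emph{every} one of the $T/N$ steps, since a single failure breaks the induction. By contrast, propagating the $1/(1-\gamma)$ factors and absorbing constants into the $\mathcal{O}(\cdot)$ notation is routine bookkeeping.
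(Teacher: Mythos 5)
The paper itself offers no proof of this lemma --- it is stated bare in the appendix, with the authors explicitly deferring the overall convergence pipeline to \citet{DBLP:conf/iclr/Agarwal2022} --- so strictly speaking there is no in-paper argument to compare yours against. Your reconstruction follows exactly the pipeline the paper intends and partially supplies: the decomposition of Remark~\ref{rem:bias-var}, the bias bound of Lemma~\ref{lem:contract2}, the variance bound via the corollary to Lemma~\ref{lem:linear_variance} (with the covering number $C_\Phi$ entering precisely where you say it must, to pass from fixed $(\mathbf{x},\mathbf{w})$ to the data-dependent iterates), a union bound over the $T/N$ target updates (the source of the $T/(N\delta)$ factors), the induction keeping $\|\theta_k\|\le 4/(1-\gamma)$, and the $\gamma$-contraction across target-network updates that yields $\gamma^{T/N}$ and, via the geometric sum $\sum_j\gamma^j\le 1/(1-\gamma)$, converts the per-update $(1-\gamma)^2$ denominators into the final $(1-\gamma)^4$. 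Your proposal is therefore correct in approach and, if written out in full, would be strictly more complete than what the paper provides; the only soft spot is that constants such as the $25$ (i.e., $\|w_1-w^*\|_\phi\le 5/(1-\gamma)$) are reverse-engineered from the statement rather than derived, which is bookkeeping rather than a gap.
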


\section{Extra Notations and Definitions} \label{apx:extra}

\begin{definition}[$\varepsilon$-Net, Covering Number and Metric Entropy] \label{def:cover_num}
Given metric space $(\mathbb{R}^d,\|\cdot\|_2)$, consider a region $\mathcal{K}\subset \mathbb{R}^d$. 1) A subset of Euclidean balls $\mathcal{N}$ is called an $\varepsilon$-Net of $\mathcal{K}$  (for $\mathcal{N}\subseteq\mathcal{K}, \varepsilon>0$) if every point $x\in\mathcal{K}$ is within $\varepsilon$ distance of a point of $\mathcal{N}$:
\begin{align*}
    \exists x'\in \mathcal{N}, \|x-x'\|_2\le \varepsilon, \text{ for all } x\in \mathcal{K}
\end{align*}
2) Equivalently, we denote $\mathcal{N}(\mathcal{K},\varepsilon)$ as the smallest number of closed balls with centers in $\mathcal{K}$ and radius $\varepsilon$ whose union covers $\mathcal{K}$. 3) Metric Entropy. It is the Logarithm of the covering number $\log_2\mathcal{N}(\mathcal{K},\varepsilon)$.
\end{definition}

\begin{theorem}[Dudley's Integral Inequality] \label{theorem:dudley}
Let $\{\mathbf{x}_t\in\mathbb{R}^d\}_{t=1}^N\in\mathcal{K}$ be a random process with zero means on the metric space $(\mathbb{R}^d,\|\cdot\|_2)$ with sub-gaussian increments. Then
\begin{align*}
    \mathbb{E}\sup_{\mathbf{x}_t\in\mathcal{K}} x_t\ge CK\int_{0}^\infty\sqrt{\log_2\mathcal{N}(\mathcal{K},\varepsilon)}\mathit{d}\varepsilon
\end{align*}
\end{theorem}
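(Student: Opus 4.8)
The plan is to prove the stated \emph{lower} bound on $\mathbb{E}\sup_{\mathbf{x}_t\in\mathcal{K}} X_t$ (writing $X_t$ for the real-valued process value indexed by the points of $\mathcal{K}$) through Sudakov-type minoration aggregated across dyadic scales, since the classical chaining argument produces the matching inequality in the \emph{opposite} direction and cannot be used as-is. The single most important tool is Sudakov's minoration: for a (sub-)Gaussian process with increments of parameter $K$ and any fixed scale $\varepsilon>0$, one has $\mathbb{E}\sup_{\mathbf{x}_t\in\mathcal{K}} X_t \ge c\,K\,\varepsilon\sqrt{\log_2 \mathcal{N}(\mathcal{K},\varepsilon)}$ for an absolute constant $c>0$. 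First I would establish this single-scale estimate: take a maximal $\varepsilon$-separated subset of $\mathcal{K}$ of cardinality $m=\mathcal{N}(\mathcal{K},\varepsilon)$, note that on this subset the pairwise increments are bounded below by $\varepsilon$, and invoke anti-concentration of the maximum of $m$ essentially-independent sub-gaussian variables to obtain $\mathbb{E}\max_{i\le m} X_i \gtrsim \varepsilon\sqrt{\log m}$. For the genuinely sub-gaussian (as opposed to exactly Gaussian) case I would first pass to a Gaussian comparison via Sudakov--Fernique.

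Next I would upgrade this single-scale bound to the full integral. Set dyadic scales $\varepsilon_k = 2^{-k}\,\mathrm{diam}(\mathcal{K})$ and build a nested family of nets $T_0\subseteq T_1\subseteq\cdots$ with $|T_k| = \mathcal{N}(\mathcal{K},\varepsilon_k)$. The key step assembles the per-scale Sudakov contributions into a single lower bound for the global supremum: one carries out a hierarchical (generic-chaining / tree) decomposition in which Sudakov is applied within each cell of the level-$k$ partition, and super-additivity of the resulting growth functional along the tree yields $\mathbb{E}\sup_{\mathbf{x}_t\in\mathcal{K}} X_t \gtrsim K\sum_k \varepsilon_k \sqrt{\log_2 \mathcal{N}(\mathcal{K},\varepsilon_k)}$. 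Finally, because $\varepsilon\mapsto \mathcal{N}(\mathcal{K},\varepsilon)$ is non-increasing, the dyadic sum dominates the Riemann integral, $\sum_k \varepsilon_k\sqrt{\log_2\mathcal{N}(\mathcal{K},\varepsilon_k)} \ge \int_0^\infty \sqrt{\log_2 \mathcal{N}(\mathcal{K},\varepsilon)}\,d\varepsilon$, and absorbing all absolute constants into a single $C$ recovers the claimed inequality.

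The main obstacle is exactly this second step: converting a single-scale Sudakov bound into the \emph{integral} lower bound is not valid for arbitrary entropy profiles, since Sudakov minoration by itself controls only the largest scale, $\sup_\varepsilon \varepsilon\sqrt{\log_2\mathcal{N}(\mathcal{K},\varepsilon)}$, and a naive union over scales loses the geometry needed to reach the integral. Making the tree aggregation succeed genuinely requires either Talagrand's generic-chaining / majorizing-measure machinery, or a regularity hypothesis on $\mathcal{N}(\mathcal{K},\cdot)$ (for instance a doubling or polynomial-growth condition) under which the dyadic sum and the integral are comparable and the per-scale minorations stitch together without loss. I would therefore state this regularity assumption explicitly and perform the tree construction under it, flagging that without such a hypothesis the right-hand side must be replaced by the $\gamma_2$-functional to retain a correct two-sided statement.
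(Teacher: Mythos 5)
There is no paper proof to match here: Theorem~\ref{theorem:dudley} is imported background, stated without proof and used only through the citation of \citet[Theorem 8.1.6]{vershynin2018high} in the corollary that bounds the variance term. Moreover, the statement as printed contains a sign error: Dudley's integral inequality is the \emph{upper} bound $\mathbb{E}\sup_{t}X_t \le CK\int_0^\infty\sqrt{\log\mathcal{N}(\mathcal{K},\varepsilon)}\,d\varepsilon$, and that is the direction the paper actually needs downstream (to bound an expected supremum from above by $C_\Phi$ plus a log term); the printed $\ge$ is a typo. Consequently, the proof you were asked to supply is the standard dyadic chaining argument that you explicitly set aside in your first sentence: nested nets $T_k$ at scales $\varepsilon_k=2^{-k}\operatorname{diam}(\mathcal{K})$, the telescoping decomposition $X_t-X_{t_0}=\sum_k\bigl(X_{\pi_k(t)}-X_{\pi_{k-1}(t)}\bigr)$, the sub-gaussian maximal inequality at each level contributing $K\varepsilon_{k-1}\sqrt{\log\bigl(|T_k|\,|T_{k-1}|\bigr)}$, and comparison of the dyadic sum with the entropy integral. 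You correctly identified which side chaining proves; the error was trusting the printed inequality over that knowledge.

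Taking the printed $\ge$ at face value, your plan fails at the first step, not only at the aggregation step you flagged. Sudakov minoration is a theorem about \emph{Gaussian} processes; it is false under a mere sub-gaussian increment hypothesis, because that hypothesis is one-sided. Concretely, the constant process $X_t\equiv X_{t_0}$ satisfies $\|X_s-X_t\|_{\psi_2}=0\le K\|s-t\|_2$ with $K=1$, yet $\mathbb{E}\sup_t X_t - \mathbb{E}X_{t_0}=0$ while the entropy integral is strictly positive for any $\mathcal{K}$ with more than one point; so no lower bound of this shape can hold, and Sudakov--Fernique cannot repair this, since it compares two Gaussian processes and requires increment domination in the direction you do not have (there is no Gaussian minorant attached to an upper bound on increments). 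Even restricting to genuinely Gaussian processes, the integral lower bound is false: for $\mathcal{K}=\{e_n/\sqrt{1+\log n}\}\subset\ell^2$ with the canonical process $X_t=\langle g,t\rangle$, one has $\mathbb{E}\sup_t X_t<\infty$ while $\sqrt{\log\mathcal{N}(\mathcal{K},\varepsilon)}\gtrsim 1/\varepsilon$ makes the integral diverge --- this is the classical example showing Dudley is not tight, and it is why, as you yourself noted, the correct two-sided statement replaces the entropy integral by Talagrand's $\gamma_2$-functional. Your closing paragraph thus effectively concedes the statement is unprovable as written; the right conclusion to draw from that diagnosis is not to add a regularity hypothesis on $\mathcal{N}(\mathcal{K},\cdot)$ (which would prove a different theorem than the one the paper invokes), but to flip the inequality to $\le$ and give the chaining proof.
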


\begin{lemma}[Cauchy-Schwarz Inequality]  \label{lem:cauchy} Based on Assumption~\ref{asump:phi-kappa} that the feature vector for state-action is bounded $\phi(s,a)^\top \phi(s,a)\le 1$, for all $(s,a)\in\mathcal{S}\times \mathcal{A}$.
For $1\le l,l'\le L$, we have:
\begin{align}
|\phi_{l}^\top \phi_{l'}|^2\le \phi_{l}^\top \phi_{l} \cdot   \phi_{l'}^\top \phi_{l'} \le 1.
\end{align}
\end{lemma}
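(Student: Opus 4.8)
The plan is to treat the two claimed inequalities separately. The left inequality $|\phi_l^\top \phi_{l'}|^2 \le (\phi_l^\top \phi_l)(\phi_{l'}^\top \phi_{l'})$ is exactly the Cauchy--Schwarz inequality applied to the pair of vectors $\phi_l, \phi_{l'} \in \mathbb{R}^d$ under the standard Euclidean inner product $\langle u, v\rangle = u^\top v$, while the right inequality $(\phi_l^\top \phi_l)(\phi_{l'}^\top \phi_{l'}) \le 1$ is a direct consequence of the feature-boundedness hypothesis in Assumption~\ref{asump:phi-kappa}. I would simply establish each in turn and chain them.

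First I would justify the Cauchy--Schwarz step from scratch so the lemma is self-contained. For any $t \in \mathbb{R}$, nonnegativity of a squared norm gives $0 \le (\phi_l - t\phi_{l'})^\top(\phi_l - t\phi_{l'}) = \phi_l^\top\phi_l - 2t\,\phi_l^\top\phi_{l'} + t^2\,\phi_{l'}^\top\phi_{l'}$. Reading the right-hand side as a nonnegative quadratic in $t$, its discriminant must be nonpositive, which is precisely $(\phi_l^\top\phi_{l'})^2 \le (\phi_l^\top\phi_l)(\phi_{l'}^\top\phi_{l'})$. (Equivalently, one may just invoke Cauchy--Schwarz as a standard fact, as the lemma's name suggests.) This yields the first inequality.

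Next I would handle the second inequality using Assumption~\ref{asump:phi-kappa}, which states $\phi(s,a)^\top\phi(s,a) \le 1$ for every state-action pair. Since $\phi_l$ and $\phi_{l'}$ abbreviate the feature vectors $\phi(s_l,a_l)$ and $\phi(s_{l'},a_{l'})$ of specific transitions along the trajectory (with $1 \le l, l' \le L$), both factors satisfy $\phi_l^\top\phi_l \le 1$ and $\phi_{l'}^\top\phi_{l'} \le 1$, so their product is at most $1$. Combining this with the Cauchy--Schwarz bound gives the full chain and finishes the argument.

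The statement is elementary, so there is essentially no obstacle; the only point needing minor care is confirming that $\phi_l$ and $\phi_{l'}$ are genuine feature vectors of actual state-action pairs, so that the per-pair bound of Assumption~\ref{asump:phi-kappa} legitimately applies to each factor before the two bounds are multiplied together.
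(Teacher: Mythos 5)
Your proposal is correct. The paper states this lemma without any proof, treating it as a standard fact (the Cauchy--Schwarz inequality combined with the feature-boundedness in Assumption~\ref{asump:phi-kappa}), and your argument --- the discriminant derivation of $|\phi_l^\top\phi_{l'}|^2 \le (\phi_l^\top\phi_l)(\phi_{l'}^\top\phi_{l'})$ followed by bounding each factor by $1$ via the assumption --- is exactly the reasoning the paper implicitly relies on, just written out in full.
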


\begin{definition}[Positive Semi-Definite Property of Matrix] \label{def:psd}
For two symmetric matrices $A, B \in \mathbb{R}^{d\times d}$, we say $A \preceq B$ if $B - A$ is positive semi-definite: $\mathbf{x}^\top (B-A)\mathbf{x}\ge 0$, for all $\mathbf{x}\in\mathbb{R}^d$ non-zero $d$-dimensional vector.
\end{definition}

\begin{lemma}[AM–GM Inequality] \label{lem:am-gm}
The inequality of arithmetic and geometric means, or more briefly the AM–GM inequality, states that the arithmetic mean of a list of non-negative real numbers is greater than or equal to the geometric mean of the same list.
For $x,y\in \mathbb{R}$, $|xy|\le \frac{1}{2}(x^2+y^2)$.
\end{lemma}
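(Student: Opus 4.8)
The plan is to establish the two-variable form of the inequality actually invoked elsewhere, namely $|xy| \le \tfrac{1}{2}(x^2 + y^2)$ for all $x, y \in \mathbb{R}$ (this is the instance used in the proof of Lemma~\ref{lem:relax} to pass from $|\mathbf{x}^\top \phi_{l_1}\phi^\top_{l_k}\mathbf{x}|$ to a symmetric sum). The whole argument rests on the non-negativity of a perfect square, so nothing beyond elementary algebra is needed.

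First I would observe that $|xy| = |x|\,|y|$ together with $x^2 = |x|^2$ and $y^2 = |y|^2$, which lets me reduce everything to the non-negative quantities $a := |x| \ge 0$ and $b := |y| \ge 0$; it then suffices to prove $ab \le \tfrac{1}{2}(a^2 + b^2)$. Next I would start from the trivially valid inequality $(a-b)^2 \ge 0$ and expand it to $a^2 - 2ab + b^2 \ge 0$. Rearranging gives $2ab \le a^2 + b^2$, and dividing by $2$ yields $ab \le \tfrac{1}{2}(a^2 + b^2)$. Substituting back $a = |x|$ and $b = |y|$ and using $|xy| = |x|\,|y|$ finishes the proof.

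There is essentially no obstacle here: the claim is exactly the base ($n=2$) case of the general arithmetic–geometric mean inequality quoted in the statement, equivalently the two-term AM--GM applied to the non-negative numbers $x^2$ and $y^2$, since $\sqrt{x^2 y^2} = |xy| \le \tfrac{1}{2}(x^2 + y^2)$. The only point deserving a moment of care is that the inequality must hold for \emph{signed} reals $x, y$, which is precisely why I pass to absolute values before completing the square; equality holds exactly when $|x| = |y|$, a remark I would include for completeness although it is not needed downstream.
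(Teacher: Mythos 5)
Your proof is correct. Note that the paper itself states this lemma without any proof at all --- it appears in the appendix of auxiliary facts (``Extra Notations and Definitions'') as a standard textbook inequality, invoked only once, in the proof of Lemma~\ref{lem:relax} to bound $|\mathbf{x}^\top \phi_{l_1}\phi^\top_{l_k}\mathbf{x}|$ by $\frac{1}{2}\bigl((\mathbf{x}^\top\phi_{l_1})^2 + (\phi_{l_k}^\top\mathbf{x})^2\bigr)$. Your argument --- reducing to non-negative $a=|x|$, $b=|y|$ and expanding $(a-b)^2 \ge 0$ --- is the canonical proof of exactly the two-variable form the paper uses, and your care in passing to absolute values is the right touch, since the statement is for signed reals. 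Nothing is missing; you have simply supplied the elementary justification the paper took for granted.
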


\begin{lemma}[Recursive Formula] \label{eq:yanghui}
Let $m,n\in \mathbb{N}$ be positive integers.  $\binom{n}{k}$ denotes a binomial coefficient, which is computed as $\frac{n!}{k!(n-k)!}$. Then for all $1\le k\le n-1$, we have:
\begin{equation*}
 \binom{n -1}{k} + \binom{n -1}{k-1}= \binom{n  }{k}
\end{equation*}
\end{lemma}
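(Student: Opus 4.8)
The plan is to prove this identity---classically known as \emph{Pascal's rule}---directly from the factorial definition $\binom{n}{k}=\frac{n!}{k!(n-k)!}$ supplied in the statement. First I would expand each summand on the left-hand side, writing $\binom{n-1}{k}=\frac{(n-1)!}{k!(n-1-k)!}$ and $\binom{n-1}{k-1}=\frac{(n-1)!}{(k-1)!(n-k)!}$. The hypothesis $1\le k\le n-1$ ensures that all four factorial arguments $k,\,k-1,\,n-1-k,\,n-k$ are nonnegative, so both expressions are well-defined under the stated formula.

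Next I would bring the two fractions to the common denominator $k!(n-k)!$. Using the telescoping factorial relations $k!=k\cdot(k-1)!$ and $(n-k)!=(n-k)\cdot(n-1-k)!$, I multiply the first summand top and bottom by $(n-k)$ and the second by $k$, which yields $\frac{(n-1)!(n-k)+(n-1)!\,k}{k!(n-k)!}$. The final step is purely arithmetic: factoring $(n-1)!$ out of the numerator gives $(n-1)!\big[(n-k)+k\big]=(n-1)!\,n=n!$, so the combined fraction collapses to $\frac{n!}{k!(n-k)!}=\binom{n}{k}$, matching the right-hand side exactly.

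I do not expect any genuine obstacle, as the identity is elementary; the single point meriting care is confirming that the index range $1\le k\le n-1$ keeps every factorial argument nonnegative, so that each coefficient is governed by the explicit formula rather than by the degenerate convention $\binom{n}{k}=0$ for $k>n$ invoked elsewhere in the paper. For completeness I would also record the combinatorial alternative: since $\binom{n}{k}$ counts the $k$-element subsets of an $n$-element set, conditioning on whether one fixed element belongs to the subset splits these into the $\binom{n-1}{k-1}$ subsets containing it and the $\binom{n-1}{k}$ avoiding it, recovering the identity by double counting with no algebra at all.
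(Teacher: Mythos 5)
Your proof is correct. Note that the paper itself states this lemma (Pascal's rule, the recursion behind Yang Hui's/Pascal's triangle) without any proof at all---it appears in the appendix's toolkit of standard facts and is simply invoked---so there is no paper argument to compare against. Your factorial computation is complete and careful: reducing both summands to the common denominator $k!(n-k)!$ via $k! = k\cdot(k-1)!$ and $(n-k)! = (n-k)\cdot(n-1-k)!$, then collapsing the numerator $(n-1)!\bigl[(n-k)+k\bigr] = n!$, is exactly the standard derivation, and your observation that the hypothesis $1\le k\le n-1$ keeps every factorial argument nonnegative (so the explicit formula, not the vanishing convention, governs each term) is the one point worth checking. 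The combinatorial double count you append is a valid alternative and arguably the more illuminating argument, though either one alone suffices.
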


\begin{lemma}[Rising Sum of Binomial Coefficients]
Let $m,n\in \mathbb{N}$ be positive integers. Then:
\begin{equation*}
\sum_{j = 0}^m  \binom{n + j }{n} =  \binom{n + m + 1}{n + 1} = \binom{n + m + 1}{m}
\end{equation*}
\end{lemma}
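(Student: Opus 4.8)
The plan is to prove this identity (the classical ``hockey-stick'' identity) by induction on $m$, using the Pascal recursion of Lemma~\ref{eq:yanghui} as the sole algebraic engine, and then to recover the second equality from the symmetry of the binomial coefficients. First I would handle the base case $m = 0$, where both sides collapse to a single term: the left side is $\binom{n}{n} = 1$ and the right side is $\binom{n+1}{n+1} = 1$, so the identity holds.

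For the inductive step I would assume $\sum_{j=0}^{m}\binom{n+j}{n} = \binom{n+m+1}{n+1}$ and split off the final summand of the sum running up to $m+1$:
\begin{equation*}
\sum_{j=0}^{m+1}\binom{n+j}{n} = \binom{n+m+1}{n+1} + \binom{n+m+1}{n}.
\end{equation*}
Applying the Pascal recursion of Lemma~\ref{eq:yanghui}, written in the form $\binom{N-1}{k}+\binom{N-1}{k-1}=\binom{N}{k}$ with $N = n+m+2$ and $k = n+1$, combines the two terms on the right into $\binom{n+m+2}{n+1}$, which is precisely the claimed closed form with $m$ advanced to $m+1$; this closes the induction. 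The second equality $\binom{n+m+1}{n+1}=\binom{n+m+1}{m}$ then follows at once from the reflection $\binom{N}{k}=\binom{N}{N-k}$ applied with $N=n+m+1$ and $k=n+1$, since $N-k=m$.

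I do not expect a genuine conceptual obstacle here; the only point demanding care is the index bookkeeping when invoking Lemma~\ref{eq:yanghui}, namely verifying that its hypothesis $1 \le k \le N-1$ is met (here $1 \le n+1 \le n+m+1$, valid for $n\ge 0$ and $m\ge 0$). As a self-contained alternative that sidesteps induction, I would rearrange Pascal's rule into $\binom{n+j}{n} = \binom{n+j+1}{n+1} - \binom{n+j}{n+1}$, sum this over $j = 0,\ldots,m$, and observe that the sum telescopes; with the boundary convention $\binom{n}{n+1}=0$ it collapses directly to $\binom{n+m+1}{n+1}$, after which the reflection identity again supplies the final form $\binom{n+m+1}{m}$.
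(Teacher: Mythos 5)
The paper never proves this lemma: it is stated in the appendix of auxiliary facts (Section~\ref{apx:extra}), alongside Pascal's rule and the Vandermonde identity, as a standard combinatorial identity with no argument attached. So there is no paper proof to compare against, and your write-up fills a gap rather than duplicating or diverging from one. Your argument itself is correct: the base case $m=0$ checks out, the inductive step correctly splits off the last summand and merges $\binom{n+m+1}{n+1}+\binom{n+m+1}{n}$ via Pascal's rule (the paper's Lemma~\ref{eq:yanghui}, whose hypothesis $1\le k\le N-1$ you rightly verify with $k=n+1$, $N=n+m+2$), and the reflection $\binom{N}{k}=\binom{N}{N-k}$ yields the second equality. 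Your telescoping alternative, rewriting $\binom{n+j}{n}=\binom{n+j+1}{n+1}-\binom{n+j}{n+1}$ and summing, is equally valid and has the small advantage of matching the flavor of the paper's other stated binomial facts (Lemma~\ref{lem:repeated-comb} is likewise a telescoping-type identity); either version relies only on tools the paper already records, so either could be inserted as the missing proof.
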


\begin{lemma}[Vandermonde identity \citet{haddad2021repeated}]  \label{lem:repeated-comb}
For any $k, q, n \in \mathbb{N}$ such that $n \ge q$, we have:
\begin{equation*}
\sum_{i=q}^n\binom{i }{k}=\binom{n+1}{k+1} - \binom{q }{k+1}
\end{equation*}
\end{lemma}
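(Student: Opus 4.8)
The plan is to obtain this identity by a telescoping argument built on Pascal's rule (Lemma~\ref{eq:yanghui}), rather than by induction, since telescoping exposes the cancellation directly and avoids separate base-case bookkeeping. The governing observation is that each summand $\binom{i}{k}$ can be written as a forward difference of the sequence $a_i := \binom{i}{k+1}$, after which the sum collapses to its two endpoint terms.

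First I would rewrite Pascal's rule $\binom{i}{k+1}+\binom{i}{k}=\binom{i+1}{k+1}$, which follows from Lemma~\ref{eq:yanghui} under the substitution $n\mapsto i+1$, $k\mapsto k+1$, in the rearranged difference form $\binom{i}{k}=\binom{i+1}{k+1}-\binom{i}{k+1}=a_{i+1}-a_i$. I would record explicitly that this rearrangement is valid for every $i\ge 0$ under the standard convention $\binom{n}{k}=0$ whenever $k<0$ or $k>n$, so that the identity stays correct near the boundary indices where some of the coefficients vanish.

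Next I would substitute this difference form into the sum and telescope:
\[
\sum_{i=q}^{n}\binom{i}{k}=\sum_{i=q}^{n}\left(a_{i+1}-a_i\right)=a_{n+1}-a_q=\binom{n+1}{k+1}-\binom{q}{k+1},
\]
which is exactly the claimed identity for $n\ge q$, since all intermediate terms $a_{q+1},\ldots,a_{n}$ cancel in consecutive pairs.

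There is no genuine analytic obstacle here; the only point requiring care is the treatment of out-of-range binomial coefficients at the endpoints, which I would dispatch by fixing the zero-convention above before telescoping. As a cross-check I would verify the base instance $n=q$ directly: the left side is $\binom{q}{k}$, while the right side $\binom{q+1}{k+1}-\binom{q}{k+1}$ equals $\binom{q}{k}$ again by Lemma~\ref{eq:yanghui}. An alternative but slightly longer route would be plain induction on $n$, using this same base case and a single application of Pascal's rule in the inductive step; I expect the telescoping proof to be the cleanest presentation.
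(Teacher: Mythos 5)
Your proof is correct. Note that the paper itself never proves this lemma: it is stated in the appendix of auxiliary facts and discharged by a citation to \citet{haddad2021repeated}, so there is no internal argument to compare against. Your telescoping proof fills that gap in a self-contained way, and it does so using only machinery the paper already has: Pascal's rule (Lemma~\ref{eq:yanghui}) rearranged into the difference form $\binom{i}{k}=\binom{i+1}{k+1}-\binom{i}{k+1}$, plus the standard convention $\binom{n}{k}=0$ for $k>n$ or $k<0$ to cover the boundary indices $i\le k$ where the paper's stated range $1\le k\le n-1$ for Pascal's rule does not literally apply --- a point you correctly flag and dispatch. The cancellation $\sum_{i=q}^{n}(a_{i+1}-a_i)=a_{n+1}-a_q$ and the base-case check at $n=q$ are both sound. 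One cosmetic remark: the identity you proved is usually called the hockey-stick (or Christmas-stocking) identity; Vandermonde's identity properly refers to $\sum_{j}\binom{m}{j}\binom{n}{p-j}=\binom{m+n}{p}$, so the paper's naming is loose, though this does not affect the mathematics.
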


\begin{lemma}  \label{lem:repeated-comb2}
For any $k, q, n \in \mathbb{N}$ such that $n \ge q$, we have:
\begin{equation*}
A_n=\sum_{i=q}^n\binom{2i }{k}\qquad B_n=\sum_{i=q}^n\binom{2i-1 }{k}
\end{equation*}
$A_n+B_n$ has a closed form, $A_n-B_n$ also has a closed form.
\end{lemma}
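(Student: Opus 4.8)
The plan is to treat the two assertions separately, because $A_n+B_n$ and $A_n-B_n$ behave quite differently: the sum collapses into a single contiguous run of binomial coefficients and closes immediately through the hockey-stick identity already recorded as Lemma~\ref{lem:repeated-comb}, whereas the difference only simplifies into another sum of the same structural type and must be closed by setting up and unrolling a recursion in the lower index.

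For $A_n+B_n$ I would first observe that the two sums index complementary parities: $A_n$ ranges over the even upper arguments $2q,2q+2,\dots,2n$ while $B_n$ ranges over the odd upper arguments $2q-1,2q+1,\dots,2n-1$. Their union is exactly every integer from $2q-1$ to $2n$, so $A_n+B_n=\sum_{j=2q-1}^{2n}\binom{j}{k}$. Applying Lemma~\ref{lem:repeated-comb} with lower limit $2q-1$ and upper limit $2n$ then yields the closed form $A_n+B_n=\binom{2n+1}{k+1}-\binom{2q-1}{k+1}$. This step is routine.

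For $A_n-B_n$ I would pair the terms index-by-index and apply Pascal's rule (Lemma~\ref{eq:yanghui}) in the form $\binom{2i}{k}-\binom{2i-1}{k}=\binom{2i-1}{k-1}$, giving $A_n-B_n=\sum_{i=q}^{n}\binom{2i-1}{k-1}$; equivalently, $A_n-B_n=\sum_{j=2q-1}^{2n}(-1)^{j}\binom{j}{k}$. The right-hand side is again a sum of the odd-argument ($B$-type) form but with the lower index decremented, so writing $B_n^{(m)}:=\sum_{i=q}^{n}\binom{2i-1}{m}$ I would couple the two identities: the merge argument gives $A_n^{(m)}+B_n^{(m)}=\binom{2n+1}{m+1}-\binom{2q-1}{m+1}$, while Pascal gives $A_n^{(m)}-B_n^{(m)}=B_n^{(m-1)}$. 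Subtracting produces the recursion $B_n^{(m)}=\tfrac12\bigl(\binom{2n+1}{m+1}-\binom{2q-1}{m+1}\bigr)-\tfrac12 B_n^{(m-1)}$ with base case $B_n^{(0)}=\sum_{i=q}^{n}1=n-q+1$. Unrolling this recursion down to the base case expresses $B_n^{(k-1)}=A_n-B_n$ as an explicit finite alternating combination of the binomials $\binom{2n+1}{t+1}$ and $\binom{2q-1}{t+1}$ carrying dyadic weights of the form $(-1)^{s}2^{-s}$, which is the desired closed form.

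The main obstacle is precisely the difference: unlike the sum, it does not telescope into a two-term hockey-stick expression, and the naive Pascal reduction only trades one such sum for another of identical shape. The essential move is recognizing that the merge identity and the Pascal identity together form a solvable linear system in $A_n^{(m)}$ and $B_n^{(m)}$, so that $A_n-B_n=B_n^{(k-1)}$ can be resolved by descending on the lower index through exactly $k-1$ steps to the trivial base case $B_n^{(0)}=n-q+1$. Care is needed to verify the boundary convention $\binom{j}{r}=0$ for $r>j$, so that the decremented sums remain valid across the whole range, and to confirm that the recursion terminates cleanly at $m=0$.
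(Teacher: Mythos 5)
The paper itself offers \emph{no} proof of this lemma: it is stated bare in the appendix of auxiliary facts (Section ``Extra Notations and Definitions''), and as far as the text shows it is never invoked anywhere else in the paper, so there is no paper argument to compare yours against --- your proposal supplies a proof where the paper has none, and the proof you give is correct. The parity merge $A_n+B_n=\sum_{j=2q-1}^{2n}\binom{j}{k}$ followed by Lemma~\ref{lem:repeated-comb} indeed gives $A_n+B_n=\binom{2n+1}{k+1}-\binom{2q-1}{k+1}$. For the difference, Pascal's rule (Lemma~\ref{eq:yanghui}) gives $A_n^{(m)}-B_n^{(m)}=B_n^{(m-1)}$ for the families $A_n^{(m)}=\sum_{i=q}^n\binom{2i}{m}$, $B_n^{(m)}=\sum_{i=q}^n\binom{2i-1}{m}$, and combining this with the merge identity at order $m$ yields the recursion $B_n^{(m)}=\tfrac12\bigl(\binom{2n+1}{m+1}-\binom{2q-1}{m+1}\bigr)-\tfrac12 B_n^{(m-1)}$ with base case $B_n^{(0)}=n-q+1$; unrolling it over $k-1$ steps is a legitimate explicit formula for $A_n-B_n=B_n^{(k-1)}$. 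Two caveats are worth recording. First, your ``closed form'' for the difference is a sum whose length grows with $k$, carrying dyadic weights; this is not a fixed-length expression, but it is essentially optimal, since the alternating sum $\sum_j(-1)^j\binom{j}{k}$ has exact antidifference $\frac{(-1)^{j+1}}{2}\sum_{s=0}^{k}(-2)^{-s}\binom{j}{k-s}$, so any closed form necessarily has this $k$-term dyadic structure --- and since the lemma never says what the closed forms are, your answer is as sharp as the statement admits. Second, the Pascal step requires $k\ge 1$ (for $k=0$ the difference is identically zero, consistent with the convention $\binom{j}{-1}=0$), and the base case $B_n^{(0)}=n-q+1$ implicitly uses $q\ge 1$ so that $2i-1\ge 0$ throughout; both edge cases are trivial but should be mentioned since the lemma allows arbitrary $k,q\in\mathbb{N}$.
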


\begin{lemma}[Linearity of Expectation] For random variables $X_1,X_2,\ldots,X_n $
  and constants $c_1,c_2,\ldots,c_n$, we have:
  \begin{equation*}
    \mathbb{E}\sum_{i=1}^nc_ix_i=\sum_{i=1}^nc_i\mathbb{E}X_i
  \end{equation*}
\end{lemma}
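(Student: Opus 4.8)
The plan is to reduce the identity, by induction on $n$, to two elementary properties of expectation: \emph{homogeneity}, $\mathbb{E}[cX]=c\,\mathbb{E}[X]$, and \emph{additivity}, $\mathbb{E}[X+Y]=\mathbb{E}[X]+\mathbb{E}[Y]$, each of which is immediate from the definition of expectation as a weighted sum (discrete case) or a Lebesgue integral (general case). First I would settle the base case $n=1$, which is exactly homogeneity: in the discrete case I write $\mathbb{E}[c_1X_1]=\sum_{x} c_1 x\,\mathbb{P}(X_1=x)$ and factor the constant out of the sum to get $c_1\sum_x x\,\mathbb{P}(X_1=x)=c_1\,\mathbb{E}[X_1]$; in the general case this is just scalar homogeneity of the integral, $\int c_1 X_1\,d\mathbb{P}=c_1\int X_1\,d\mathbb{P}$.

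For the inductive step I would first prove additivity for two summands and then peel off one term at a time. Using the joint law of $(X,Y)$ I write $\mathbb{E}[X+Y]=\sum_{x,y}(x+y)\,\mathbb{P}(X=x,Y=y)$, split this into two sums, and marginalize each over the variable it does not involve, recovering $\sum_x x\,\mathbb{P}(X=x)+\sum_y y\,\mathbb{P}(Y=y)=\mathbb{E}[X]+\mathbb{E}[Y]$ (equivalently, additivity of the integral $\int(X+Y)\,d\mathbb{P}=\int X\,d\mathbb{P}+\int Y\,d\mathbb{P}$). Assuming the claim for $n-1$ summands, I then write $\mathbb{E}\big[\sum_{i=1}^{n}c_iX_i\big]=\mathbb{E}\big[\sum_{i=1}^{n-1}c_iX_i\big]+\mathbb{E}[c_nX_n]$ by additivity, apply the induction hypothesis to the first term and homogeneity to the second, and obtain $\sum_{i=1}^{n}c_i\,\mathbb{E}[X_i]$, closing the induction.

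The step I expect to demand the most care is the interchange of summation (or integration) with expectation hidden in the splitting of $\mathbb{E}[X+Y]$ into two pieces. This interchange is precisely what can fail for an \emph{infinite} family of summands absent an extra hypothesis such as absolute convergence or a dominated/monotone convergence argument. Here, however, the index set $\{1,\dots,n\}$ is \emph{finite}, so only finite additivity of the sum (or integral) is invoked and the rearrangement is unconditionally valid. The sole standing requirement is that each $X_i$ be integrable, i.e. $\mathbb{E}[X_i]$ exists and is finite, so that the right-hand side is well defined; given this, no convergence machinery is needed and the argument closes cleanly.
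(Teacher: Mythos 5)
Your proof is correct. Note, however, that the paper offers no proof of this lemma at all: it is stated as a standard auxiliary fact in the appendix of extra notations and definitions, so there is no argument of the paper's to compare against. Your induction on $n$ — base case from homogeneity $\mathbb{E}[cX]=c\,\mathbb{E}[X]$, two-term additivity via the joint law (or additivity of the integral), then peeling off one summand — is the standard textbook argument and correctly fills the omission. You are also right on the two points of care: the interchange is unconditionally valid only because the index set is finite, and each $X_i$ must be integrable for the right-hand side to be well defined, a hypothesis the paper's statement leaves implicit (the statement also has a typographical slip, writing $x_i$ on the left where $X_i$ is meant).
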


\end{document}